\newtheorem{lemma}{Lemma}
\journal{Knowledge-Based Systems}
\begin{document}

\begin{frontmatter}



\title{Subspace Clustering Using a Symmetric Low-Rank Representation}
\tnotetext[t1]{This work was supported by the National Science Foundation of China (Grant No. 61432012, U1435213, 61402306 and 61602329).}

\author{Jie Chen}
\author{Hua Mao}
\author{Yongsheng Sang}
\author{Zhang Yi \corref{cor1}}
\ead{zyiscu@gmail.com}

\cortext[cor1]{Corresponding author}

\address{Machine Intelligence Laboratory, College of Computer Science, Sichuan University, Chengdu 610065, P.R. China}

\begin{abstract}

In this paper, we propose a low-rank representation with symmetric constraint (LRRSC) method for robust subspace clustering. Given a collection of data points approximately drawn from multiple subspaces, the proposed technique can simultaneously recover the dimension and members of each subspace. LRRSC extends the original low-rank representation algorithm by integrating a symmetric constraint into the low-rankness property of high-dimensional data representation. The symmetric low-rank representation, which preserves the subspace structures of high-dimensional data, guarantees weight consistency for each pair of data points so that highly correlated data points of subspaces are represented together. Moreover, it can be efficiently calculated by solving a convex optimization problem. We provide a proof for minimizing the nuclear-norm regularized least square problem with a symmetric constraint. The affinity matrix for spectral clustering can be obtained by further exploiting the angular information of the principal directions of the symmetric low-rank representation. This is a critical step towards evaluating the memberships between data points. Besides, we also develop eLRRSC algorithm to improve the scalability of the original LRRSC by considering its closed form solution. Experimental results on benchmark databases demonstrate the effectiveness and robustness of LRRSC and its variant compared with several state-of-the-art subspace clustering algorithms.

\end{abstract}

\begin{keyword}


Low-rank representation \sep subspace clustering \sep affinity matrix learning \sep spectral clustering

\end{keyword}

\end{frontmatter}


\section{Introduction}
\label{Introduction}

In recent years, subspace clustering techniques have attracted much attention from researchers in many areas; for example, computer vision, machine learning, and pattern recognition. Subspace clustering is important to numerous applications such as image representation \cite{Hong2006ImgRr, Liu2010LRR1}, clustering \cite{Ho2003Cluster, Liu2010LRR, Elhamifar2013SSC, Favarob2011LRSC, Vidala2013LRSC}, and motion segmentation \cite{Yang2006MotionSeg, Rao2010MotionSeg, Zappella2011, Pham2012ISC, Zhuang2012NLRR}. The generality and importance of subspaces naturally lead to the challenging problem of subspace clustering, where the goal is to simultaneously segment data into clusters that correspond to a low-dimensional subspace. To be more specific, given a set of data points drawn from a mixture of subspaces, the task is to segment all data points into their respective subspaces.

When we consider subspace clustering in real applications, there is a large amount of high-dimensional data available; for example, digital images, video surveillance, and traffic monitoring. High-dimension data increase the computational cost of algorithms and have a negative effect on performance because of noise and corrupted observations. It is typical to impose an assumption that the high-dimensional data are approximately drawn from a union of multiple subspaces. This assumption is reasonable because data in a class can be well represented by a low-dimensional subspace of the high-dimensional ambient space. In fact, high-dimensional data often have a smaller intrinsic dimension. Examples of high-dimensional data that lie in a low-dimensional subspace of the ambient space include images of an individual's face captured under various laboratory-controlled lighting conditions, handwritten images of a digit with different rotations, translations, and thicknesses, and feature trajectories of a moving object in a video \cite{Basri2003, Boult1991Factor, Qu007SP}. This has motivated the development of a number of techniques \cite{Tibshiran1996Lasso, Donoho2006MinimalL1Norm, Wright2009RPCA, Candes2010MC, Lin2011ALM} for finding and exploiting low-dimensional structures in high-dimensional data.

A number of methods have been devised to exactly solve the subspace clustering problem \cite{Vidal2011SC}. Subspace clustering methods can be roughly divided into statistical learning based \cite{Fischler1981RANSAC}, factorization based \cite{Costeira1998MF, Govindu2005FC}, algebra based \cite{Vidal2005GPCA}, iterative \cite{Ho2003KSC, Zhang2009AIS, ZYI2010RNN}, and spectral-type based methods \cite{Elhamifar2013SSC, Liu2010LRR, Ni2010LRRPSD, Zhuang2012NLRR, Lauer2009SC}. These methods can produce good results under the assumption that data are strictly drawn from linearly independent subspaces. However, an increase in difficulty is in part caused by the data not strictly following subspace structures (because of noise and corruptions), and can lead to indistinguishable subspace clustering. Therefore, subspace clustering algorithms that take into account the multiple subspace structure of high-dimensional data are required.

Recently, some work on the Frobenius norm, sparse representation theory and rank minimization \cite{Cand2008l1norm, Recht2010MinRank, Cai2010SVT, Elhamifar2013SSC, Liu2010LRR, Ni2010LRRPSD, Toh2010NNWithLS, Liu2016DRSC, Lu2013L2Graph, Peng2016SL, Peng2016SC, Du2017SC} have recently been proposed to alleviate some of aforementioned drawbacks. Effective approaches to subspace clustering called spectral-type based methods have been developed. These methods typically perform subspace clustering in two stages: first learn an affinity matrix (an undirected graph) from the given data, and then obtain the final clustering results using a spectral clustering algorithm \cite{Luxburg2007SC} such as normalized cuts (NCuts) \cite{Shi2000Ncuts}. These techniques can effectively recover the multiple subspace structures when high-dimensional data is grossly corrupt. However, there are still several open questions, e.g., how to choose a good affinity matrix by building a similarity graph, and how to estimate the number and dimensions of underlying subspaces in a time efficient manner. For a given data matrix $X = [{x_1},{x_2}...,{x_N}] \in {\mathbf{R}^{d \times N}}$, each of which can be represented by the combination of the basis in the dictionary $A = [{a_1},{a_2},...,{a_n}] \in {\mathbf{R}^{d \times n}}$:
\begin{equation}\label{eq:lrr}
X=AZ,
\end{equation}
where it refers to $Z$ as a coefficient matrix of linear representation. Some techniques based on Frobenius norm are developed to construct L2-Graph for subspace clustering, which require low computational cost \cite{Lu2013L2Graph, Peng2015SC, Peng2016SL}. Besides, \citet{Elhamifar2013SSC} proposed a sparse subspace clustering (SSC) method, which uses the sparsest representation of the data points produced by ${l_1}$-norm minimization of the coefficient matrix of linear representation to define an affinity matrix of the undirected graph. \citet{Liu2016DRSC} proposed an efficient distributed framework for the computation of SSC on a shared-memory architecture. Then spectral clustering techniques are used to perform subspace clustering. The convex optimization model of SSC, under the assumption that the subspaces are either linearly independent or disjoint under certain conditions, results in a block diagonal solution. However, there is no global structural constraint on the spare representation. Moreover, SSC needs to carefully tune the number of nearest neighbors to improve performance, which leads to another parameter.

\citet{Liu2010LRR} recently introduced low-rank representation techniques into the subspace clustering problem and established a low-rank representation (LRR) algorithm \cite{Liu2010LRR}. LRR assumes that the data samples are approximately drawn from a mixture of multiple low-rank subspaces. The goal of LRR is to take the correlation structure of data into account, and find the lowest-rank representation of $Z$ using an appropriate dictionary. LRR solves the convex optimization problem of nuclear-norm minimization, which is considered as a surrogate for rank minimization. It performs subspace clustering excellently. However, the affinity for the spectral clustering input, which can be computed using a symmetrization step of the low-rank representation results \cite{Liu2010LRR1}, is not good at characterizing how other samples contribute to the reconstruction of a given sample. In addition, the motivation of a new version of LRR to use the matrix ${U^*}{U^{*T}}$ as an affinity for the spectral clustering input remains vague without theoretical analysis \cite{Liu2010LRR}. Here, ${U^*}$ can found using the skinny SVD of the low-rank representation $Z$, i.e., $Z = {U^*}{\Sigma ^*}{V^{*T}}$. \citet{Ni2010LRRPSD} proposed a robust low-rank subspace segmentation method to extend LRR and improve performance. It enforces the symmetric positive semi-definite (PSD) constraint on $Z$ to explicitly obtain a symmetric PSD matrix and avoid the symmetrization post-processing. LRR and its variations are guaranteed to produce block-diagonal affinity matrices if the points are already ordered according to their respective clusters under the independence assumption. However, nonnegativity of the values of the PSD matrix cannot be guaranteed by PSD conditions. Consequently, negative elements of the PSD matrix lack physical interpretation for visual data. Besides, if the low-rank representation matrix is considered as a pairwise affinity relationship for spectral clustering between data points, this inevitably leads to loss of information. This means that it does not fully capture the complexity of the problem, i.e., the intrinsic correlation of data points \cite{Agarwal2005Clustering, Zhou2006HyperGraph}. To tackle this difficulty, one needs to learn an affinity matrix by exploiting the structure of the low-rank representation.

In this paper,  we present a low-rank representation with symmetric constraint (LRRSC) method and its variant (eLRRSC) for robust subspace clustering. In particular, our motivation is to integrate the symmetric constraint into the low-rankness property of high-dimensional data representation, to learn a symmetric low-rank matrix that preserves the subspace structures of high-dimensional data. By solving the nuclear-norm minimization problem of $Z$ in a simple and efficient way, we can learn a symmetric low-rank matrix. For example, given a set of data points, we represent each point as a linear combination of the others, where the low-rank coefficients should be symmetric. Low-rank representation techniques often suffer from heavy computational cost when require iterative SVD operations. In contrast with these techniques, eLRRSC can obtain a symmetric low-rank representation in a closed form solution, which dramatically reduces the computational complexity. To obtain the closed form solution, we provide a proof to minimize the nuclear-norm regularized least square problem with a symmetric constraint. Consequently, eLRRSC can be adopted by large-scale subspace clustering problems because of its advantages of computational stability and efficiency. Besides, the symmetric effect of LRRSC and eLRRSC guarantees weight consistency for each pair of data points, so that highly correlated data points are represented together. As mentioned above, using a symmetric matrix as the input for subspace clustering may negatively affect the performance. To overcome this drawback, it is critical to investigate the intrinsically geometrical structure of the memberships of data points preserved in a symmetric low-rank representation, i.e., the angular information of the principal directions of the symmetric low-rank representation. This can improve the subspace clustering performance. An affinity that encodes the memberships of subspaces can be constructed using the angular information of the normalized rows of  ${U^*}$, or columns of $V^{*T}$, obtained from the skinny SVD of the symmetric matrix $Z$, i.e., $Z = {U^*}{\Sigma ^*}{V^{*T}}$. With the learned affinity matrix, spectral clustering can segment the data into clusters with the underlying subspaces they are drawn form. The proposed algorithm not only recovers the dimensions of each subspace, but also effectively learns a symmetric low-rank representation for the purpose of subspace clustering. In contrast to LRR, LRRSC and eLRRSC can obtain a coefficient matrix symmetric, and then they builds a desired affinity for subspace clustering. Further details will be discussed in Section \ref{sec:LRRSC}.

The contributions of the paper are summarized as follows:
\begin{enumerate}[a)]
\item It incorporates the symmetry idea into low-rank representation learning, and can successfully learn a symmetric low-rank matrix for high-dimensional data representation. We have provided a proof for minimizing the nuclear-norm regularized least square problem with a symmetric constraint.
\item A symmetric low-rank representation can be obtained by eLRRSC in a closed form solution, which can be solved very efficiently using SVD techniques.
\item It exploits the intrinsically geometrical structure of the memberships of data points preserved in a symmetric low-rank matrix (i.e., the angular information of principal directions of the symmetric low-rank representation) to construct an affinity matrix with more separation ability. This significantly improves the subspace clustering performance.
\item Compared with other state-of-the-art methods, our extensive experimental results using benchmark databases demonstrate the effectiveness and robustness of LRRSC and eLRRSC for subspace clustering.
\end{enumerate}

The remainder of this paper is organized as follows. Section \ref{sec:Relatedwork} summarizes some related work on low-rank representation techniques that inspired this work. Section \ref{sec:LRRSC} presents the proposed LRRSC for subspace clustering. Extensive experimental results using benchmark databases are presented in Section \ref{sec:Experiments}. Finally, Section \ref{sec:Conclusions} concludes this paper.

\section{A review of previous work}
\label{sec:Relatedwork}

Consider a set of data vectors $X = [{x_1},{x_2}...,{x_n}] \in {\mathbf{R}^{d \times n}}$, each column of which is drawn from a union of $k$ subspaces $\{ {S_i}\} _{i = 1}^k$ with unknown dimensions. The goal of subspace clustering is to cluster data points into their respective subspaces. A main challenge in applying spectral clustering to subspace clustering is to define a good affinity matrix, each entry of which measures the similarity between data points ${x_i}$ and ${x_j}$. This section provides a review of low-rank representation techniques for solving subspace clustering problems that are closely related to the proposed method.

\subsection{Subspace clustering by low-rank representation}
\label{sec:ReviewLRR}

Recently, \citet{Liu2010LRR} proposed a novel objective function named the low-rank representation method for subspace clustering. Instead of seeking a sparse representation as in SSC, LRR seeks the lowest-rank representation among all the candidates that can represent the data samples as linear combinations of the bases in a given dictionary. SSC enforces that $Z$ is sparse by imposing an ${l_1}$-norm regularization on $Z$, while LRR encourages $Z$ to be low-rank using nuclear-norm regularization. By using the nuclear norm as a good surrogate for the rank function, LRR solves the following nuclear norm minimization problem for the noise free case:

\begin{equation}\label{eq:lrr}
\mathop {\min }\limits_{Z} {\left\| Z \right\|_*} \qquad s.t.\qquad X = AZ,
\end{equation}
where $A = [{a_1},{a_2},...,{a_n}] \in {\mathbf{R}^{d \times n}}$ is an overcomplete dictionary, and ${\left\| \cdot \right\|_*}$ denotes the nuclear norm (the sum of the singular values of the matrix). When the subspaces are independent, LRR succeeds in recovering the desired low-rank representations.

In the case of data being grossly corrupted by noise or outliers, the LRR algorithm solves the following convex optimization problem:

\begin{equation}\label{eq:lrrwithnoisy}
\mathop {\min }\limits_{Z,E} {\left\| Z \right\|_*} + \lambda {\left\| E \right\|_l}\qquad s.t.\qquad X = AZ + E,
\end{equation}
where \({\left\|  \cdot  \right\|_l}\) indicates a certain regularization strategy for characterizing various corruptions. For example, the ${l_{2,1}}$-norm encourages the columns of $E$ to be zero. By choosing an appropriate dictionary $A$, LRR seeks the lowest-rank representation matrix of the coefficient matrix $Z$. This can also be used to recover the clean data from the original samples. To reduce the computational complexity, LRR uses $XP^*$ as its dictionary, where $P^*$ can be computed by orthogonalizing the columns of $X^T$.

The above optimization problem is convex, and can be efficiently solved by the inexact augmented Lagrange multipliers (ALM) technique in polynomial time with a guaranteed high performance \cite{Lin2011ALM}. After obtaining an optimal solution \(({Z^ * },{E^ * })\), \({Z^ * }\) is used to define an affinity matrix $\left| Z \right| + {\left| Z \right|^T}$ for spectral clustering.

To avoid symmetrization post-processing, \citet{Ni2010LRRPSD} presented an improved LRR model with PSD constraints, which can be formulated as
\begin{equation}
\mathop {\min }\limits_{Z,E} {\left\| Z \right\|_*} + \lambda {\left\| E \right\|_l} \qquad s.t. \qquad X = XZ + E,Z \succeq 0.
\end{equation}

Rigorous mathematical derivations \cite{Ni2010LRRPSD} show that the LRR-PSD model is the perfect case of the LRR scheme, and establishes the uniqueness of the optimal solution. By applying a scheme similar to LRR, LRR-PSD can also be efficiently solved. Then, ${Z^ * }$ is used to define an affinity matrix $\left| Z \right| $, after acquiring an optimal solution $({Z^ * },{E^ * })$. To reduce computational cost in the LRR scheme, \citet{Chen2016SLRR} introduced the symmetric low-rank representation(SLRR) method to avoid iterative SVD operation. This significantly decrease the computational cost for the subspace clustering.

\section{Exploitation of a low-rank representation with a symmetric constraint}
\label{sec:LRRSC}

In this section, we propose a low-rank representation with a symmetric constraint (LRRSC). Our approach takes into consideration the intrinsically geometrical structure of the symmetric low-rank representation of high-dimensional data. We first propose a low-rank representation model with a symmetric constraint, which can be efficiently calculated by solving a convex optimization problem. Then, we learn an affinity matrix for subspace clustering by exploiting the intrinsically geometrical structure of the memberships of data points preserved in the symmetric low-rank representation (i.e., the angular information of the principal directions of the symmetric low-rank representation). Finally, we discuss the convergence properties and present a computational complexity analysis of LRRSC.

\subsection{Low-rank representation with symmetric constraint}
\label{sec:LRRSC1}

When there are no errors in data $X$ (i.e., the data are strictly drawn from $k$ independent subspaces, and already ordered according to their respective clusters) the row space of the data, denoted by ${V^*}{V^{*T}}$ \cite{Wei2011RSI} (also known as the shape interaction matrix \cite{Costeira1998MF}), is a block diagonal matrix that has exactly $k$ blocks. The row space of $X$ can be used as affinity for subspace clustering, and produces good results. It can be calculated using a closed form by computing the skinny SVD of the data matrix $X$, i.e., $X = {U^*}{\Sigma ^*}{V^{*T}}$. However, real observations are often noisy. Grossly corrupted observations may reduce the performance. Low-rank representation techniques can be used to alleviate these problems \cite{Liu2010LRR, Ni2010LRRPSD}.

To guarantee weight consistency for each pair of data points, we impose a symmetric constraint on the low-rank representation. Incorporating low-rank representation with a symmetric constraint in Problem \eqref{eq:lrrwithnoisy}, we consider the following convex optimization problem to seek a symmetric low-rank representation $Z$:
\begin{equation}\label{eq:LRRSC}
\mathop {\min }\limits_{Z,E} {\left\| Z \right\|_*} + \lambda {\left\| E \right\|_{2,1}} \quad s.t. \quad  X = AZ + E, Z = {Z^T},
\end{equation}
where the parameter $\lambda > 0$ is used as a trade-off between low-rankness and the effect of noise. The low-rankness criterion effectively captures the global structure of $X$. The low-rank constraint guarantees that the coefficients of samples coming from the same subspace are highly correlated. We assume that corruptions are "sample-specific", i.e., some data vectors are corrupted and others are not. The ${l_{2,1}}$-norm is used to characterize the error term $E$, because it encourages the columns of $E$ to be zero. For small Gaussian noise, $\left\| E \right\|_F^2$ is an appropriate choice. Each coefficient pair $({z_{ij}},{z_{ji}})$ denotes the interaction between data points ${x_i}$ and ${x_j}$. The symmetric constraint criterion is incorporated into the low-rankness property of high-dimensional data representation so that it can effectively ensure the weight consistency for each pair of data points. Consequently, the symmetric low-rank representation, which preserves the subspace structures of high-dimensional data, ensures that highly correlated data points of subspaces are represented together.

To make the objective function in Problem \eqref{eq:LRRSC} separable, we first convert it to the following equivalent problem by introducing an auxiliary variable $J$:
\begin{equation}\label{eq:LRRSC_ALM}
\mathop {\min }\limits_{Z,E,J} {\left\| J \right\|_*} + \lambda {\left\| E \right\|_{2,1}} \quad s.t. \quad X = AZ + E,Z = J,J = {J^T}.
\end{equation}
The augmented Lagrangian function of Problem \eqref{eq:LRRSC_ALM} is
\begin{equation}
\begin{split}
& \mathop {\min }\limits_{Z,E,J = {J^T},{Y_1},{Y_2}} {\left\| J \right\|_*} + \lambda {\left\| E \right\|_{2,1}} + tr[Y_1^T\left(X - AZ - E\right)] + \\
&tr[Y_2^T\left(Z - J\right)] + \frac{\mu }{2}\left(\left\| {X - AZ - E} \right\|_F^2 + \left\| {Z - J} \right\|_F^2\right),
\end{split}
\end{equation}
where ${Y_1}$ and ${Y_2}$ are Lagrange multipliers, and $\mu > 0$ is a penalty parameter. The above optimization problem can be formulated as follows:
\begin{equation}
\begin{split}
& \mathop {\min }\limits_{Z,E,J = {J^T},{Y_1},{Y_2}} {\left\| J \right\|_*} + \lambda {\left\| E \right\|_{2,1}} + \\
& \frac{\mu }{2}\left( {\left\| {X - AZ - E + \frac{{{Y_1}}}{\mu }} \right\|_F^2 + \left\| {Z - J + \frac{{{Y_2}}}{\mu }} \right\|_F^2} \right).
\end{split}
\end{equation}
This can be effectively solved by inexact ALM \cite{Lin2011ALM}. The variables $J$, $Z$ and $E$ can be updated alternately at each step, while the other two variables are fixed. The updating schemes at each iteration are:
\begin{equation}
\begin{split}
\label{eq:LRRSCInDetail}
& {J_{k + 1}} = \arg \mathop {\min }\limits_{J = {J^T}} \frac{1}{\mu }{\left\| J \right\|_*} + \frac{1}{2}\left\| {J - \left(Z + \frac{{{Y_2}}}{\mu }\right)} \right\|_F^2, \\
& {Z_{k + 1}} = {\left(I + {A^T}A\right)^{ - 1}}\left({A^T}X - {A^T}E + J + \frac{{{A^T}{Y_1} - {Y_2}}}{\mu }\right), \\
& {E_{k + 1}} = \arg \min \lambda {\left\| E \right\|_{2,1}} + \frac{\mu }{2}\left\| {(E - \left(X - AZ + \frac{{{Y_1}}}{\mu }\right)} \right\|_F^2.
\end{split}
\end{equation}

The complete procedure for solving Problem \eqref{eq:LRRSCInDetail} is outlined in Algorithm \ref{alg:LRRSCAlg1}. By choosing a proper dictionary $A$, LRRSC seeks the lowest-rank representation matrix. Because each data point can be represented by the original data points, LRRSC uses the X as its dictionary. The last equation in Problem \eqref{eq:LRRSCInDetail} is a convex problem and has a closed form solution. It is solved using the ${l_{2,1}}$-norm minimization operator \cite{Liu2010LRR1}. The first equation in Problem \eqref{eq:LRRSCInDetail} is solved using the following lemma:

\begin{algorithm}
\renewcommand{\algorithmicrequire}{\textbf{Input:}}
\renewcommand\algorithmicensure {\textbf{Output:} }
\caption{Solving Problem \eqref{eq:LRRSC} by Inexact ALM}
\label{alg:LRRSCAlg1}
\begin{algorithmic}
\REQUIRE ~~\\
data matrix $X$, parameters $\lambda > 0$
\end{algorithmic}
{\bfseries Initialize:}
$Z=J=0, E=0, {Y_1}={Y_2}=0, \mu={10^{-2}}, {\mu_{\max }}={10^{10}},\rho=1.1,\varepsilon = {10^{ - 6}}$
\begin{algorithmic}[1]
\WHILE {not converged}
\STATE update the variables as \eqref{eq:LRRSCInDetail}; \\
\STATE update the multipliers: \\
\begin{center}
${Y_1} = {Y_1} + {X} - {X}Z - E$; \\
${Y_2} = {Y_2} + {X} - \mu \left(Z - J\right)$;
\end{center}
\STATE update the parameter $\mu$ by $\mu= \min (\rho \mu,{\mu _{\max}})$;
\STATE check the convergence conditions \\
\begin{center}
${\left\| {X - AZ - E} \right\|_\infty} < \varepsilon $ and ${\left\| {Z - J} \right\|_\infty} < \varepsilon $;
\end{center}
\ENDWHILE
\ENSURE ~~\\ ${Z^*},{E^*}$
\end{algorithmic}
\end{algorithm}

\begin{lemma}
\label{lemma1}
Given any square matrix $Q \in {\mathbf{R}^{n \times n}}$, the unique closed form solution to the optimization problem

\begin{equation}
{W^*} = \arg \mathop {\min }\limits_W \frac{1}{\mu }{\left\| W \right\|_*} + \frac{1}{2}\left\| {W - Q} \right\|_F^2,W = {W^T},
\end{equation}
takes the form
\begin{equation}
{W^*} = {U_r}\left({\Sigma _r} - \frac{1}{\mu } \cdot {{\rm I}_r}\right)V_r^T,
\end{equation}
 where $\widetilde Q = U\Sigma {V^T}$ is the skinny SVD of the symmetric matrix $\widetilde Q = {{(Q + {Q^T})} \mathord{\left/
 {\vphantom {{(Q + {Q^T})} 2}} \right.
 \kern-\nulldelimiterspace} 2}$, ${\Sigma _r} = diag\left({\sigma _1},{\sigma _2},...,{\sigma _r}\right)$ with $\{ r:{\sigma _r} > \frac{1}{\mu }\} $ are positive singular values, ${U_r}$ and ${V_r}$ are the corresponding singular vectors of the matrix $\widetilde Q$, and ${{\rm I}_r}$ is an $r \times r$ identity matrix.
\end{lemma}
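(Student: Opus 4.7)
The plan is to reduce the symmetry-constrained problem to an unconstrained singular value thresholding problem on the symmetrized matrix, then argue that the SVT already lies in the feasible set.

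First, I would split $Q$ into its symmetric and antisymmetric parts, $Q = \widetilde{Q} + \widehat{Q}$ with $\widetilde{Q} = (Q+Q^T)/2$ and $\widehat{Q} = (Q-Q^T)/2$. The key algebraic observation is that symmetric and antisymmetric matrices are orthogonal under the Frobenius inner product: for symmetric $S$ and antisymmetric $A$, $\langle S, A\rangle_F = \mathrm{tr}(S A) = -\mathrm{tr}(S A) = 0$. Applying this to $S = W - \widetilde{Q}$ (symmetric whenever $W=W^T$) and $A = \widehat{Q}$, I get
\begin{equation}
\|W - Q\|_F^2 \;=\; \|W - \widetilde{Q}\|_F^2 \;+\; \|\widehat{Q}\|_F^2 .
\end{equation}
Since the second term is constant in $W$, minimizing the original objective over symmetric $W$ is equivalent to minimizing $\frac{1}{\mu}\|W\|_* + \frac{1}{2}\|W - \widetilde{Q}\|_F^2$ over symmetric $W$.

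Next, I would temporarily drop the symmetry constraint and invoke the singular value thresholding theorem of Cai, Cand\`es and Shen, which asserts that the unconstrained minimizer of $\frac{1}{\mu}\|W\|_* + \frac{1}{2}\|W - \widetilde{Q}\|_F^2$ is unique and equal to $\mathcal{D}_{1/\mu}(\widetilde{Q}) = U_r(\Sigma_r - \tfrac{1}{\mu}\mathrm{I}_r)V_r^T$, where $\widetilde{Q}=U\Sigma V^T$ is the SVD and the subscript $r$ keeps only the singular values exceeding the threshold $1/\mu$. Uniqueness here is guaranteed by strict convexity of the Frobenius term.

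The remaining—and main—step is to show that this unconstrained minimizer is automatically symmetric, so that it also solves the symmetry-constrained problem. Because $\widetilde{Q}$ is symmetric, it admits a spectral decomposition $\widetilde{Q} = P D P^T$ with $D$ diagonal and $P$ orthogonal; from this one obtains an SVD $\widetilde{Q} = U \Sigma V^T$ by setting $\Sigma = |D|$ and choosing the columns of $U$ and $V$ to be the eigenvectors of $\widetilde{Q}$, with an appropriate sign flip where eigenvalues are negative. Equivalently, one can argue abstractly that SVT commutes with transposition, $\mathcal{D}_{1/\mu}(A^T) = \mathcal{D}_{1/\mu}(A)^T$, so applying it to $\widetilde{Q} = \widetilde{Q}^T$ yields a symmetric matrix. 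In either formulation, $U_r(\Sigma_r - \tfrac{1}{\mu}\mathrm{I}_r)V_r^T$ is symmetric and hence feasible, and by uniqueness it is the sought constrained minimizer $W^*$. The main obstacle I anticipate is making this symmetry-preservation argument rigorous given that the SVD of a symmetric indefinite matrix is not canonical; I would handle it by either passing to the well-defined SVT operator, or by explicitly relating eigen-pairs to singular triples and tracking signs.
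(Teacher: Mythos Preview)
Your proof is correct and takes a different route from the paper's. The first step---replacing $Q$ by $\widetilde Q=(Q+Q^T)/2$ via orthogonality of the symmetric and antisymmetric parts under the Frobenius inner product---matches the paper exactly. After that the approaches diverge. The paper does \emph{not} invoke the SVT theorem as a black box; instead it parametrizes $W=U_r\widetilde W V_r^T$ in the singular-vector frame of $\widetilde Q$, uses unitary invariance (their Lemma~2) to reduce to minimizing $\tfrac{1}{\mu}\|\widetilde W\|_*+\tfrac12\|\widetilde W-\Sigma_r\|_F^2$ over symmetric $\widetilde W$, then uses a block-partition inequality (their Lemma~3) to force the optimizer to be diagonal, and finally identifies the diagonal problem with soft-thresholding (their Lemma~4). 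Your route is shorter and arguably more robust: by citing Cai--Cand\`es--Shen for the unconstrained proximal map and then observing that $\mathcal D_{1/\mu}$ commutes with transposition (equivalently, is a well-defined spectral function), you obtain symmetry of the minimizer for free and sidestep two delicate points in the paper's argument---the a~priori restriction of $W$ to the span of $U_r,V_r$, and the translation of $W=W^T$ into $\widetilde W=\widetilde W^T$, which tacitly relies on the eigenvector/singular-vector relation for symmetric matrices that you make explicit. The paper's approach has the merit of being self-contained (it essentially re-derives SVT from first principles in this special case); yours is cleaner once the SVT theorem is granted.
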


The proof of Lemma \ref{lemma1} is based on the following three lemmas.

\begin{lemma}
\label{lemma2}
(\cite{Liu2010LRR} Lemma 7.1)
Let $U$, $V$ and $W$ be matrices with compatible dimensions. Suppose both $U$ and $V$ have orthogonal columns, i.e., ${U^T}U = I$ and ${V^T}V = I$. Then we have
\begin{equation}
{\left\| W \right\|_*} = \left\| {UW{V^T}} \right\|{}_*.
\end{equation}
Note that a similar equality also holds for the square of the Frobenius norm, $\left\| \cdot \right\|_F^2$.
\end{lemma}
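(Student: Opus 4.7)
The plan is to prove the identity by exhibiting an explicit singular value decomposition of $UWV^T$ whose singular values agree with those of $W$; equality of nuclear norms is then immediate from the definition $\|\cdot\|_* = \sum_i \sigma_i(\cdot)$.

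Concretely, I would begin with the thin SVD $W = U_W \Sigma_W V_W^T$, where $U_W$ and $V_W$ have orthonormal columns and $\Sigma_W$ is the diagonal matrix of the (strictly positive) singular values of $W$. Substitution yields
\[
UWV^T = (UU_W)\,\Sigma_W\,(VV_W)^T.
\]
The next step is to check that $UU_W$ and $VV_W$ still have orthonormal columns: using the hypotheses $U^T U = I$ and $V^T V = I$,
\[
(UU_W)^T(UU_W) = U_W^T(U^T U)U_W = U_W^T U_W = I,
\]
and symmetrically for $VV_W$. Hence the displayed factorization is a bona fide SVD of $UWV^T$, so its nonzero singular values are precisely those of $W$. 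Summing them gives $\|UWV^T\|_* = \|W\|_*$.

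For the Frobenius analogue, I would use the cyclic invariance of the trace together with $U^T U = I$ and $V^T V = I$:
\[
\|UWV^T\|_F^2 = \mathrm{tr}\bigl(V W^T U^T U W V^T\bigr) = \mathrm{tr}\bigl(W^T W\, V^T V\bigr) = \|W\|_F^2.
\]

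I do not expect a genuine obstacle here; the only bookkeeping point is that $U$ and $V$ are assumed merely to have orthonormal \emph{columns}, so in general $UU^T \neq I$ and $VV^T \neq I$ and the factors $UU_W$, $VV_W$ need not be square. This is precisely why one works with the thin SVD of $W$: it keeps $\Sigma_W$ square and ensures that appending $U$ (resp.\ $V$) on the left of $U_W$ (resp.\ $V_W$) preserves column orthonormality, which is all the argument needs.
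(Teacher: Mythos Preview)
Your argument is correct and is the standard proof of this fact: exhibiting a thin SVD of $UWV^T$ with the same singular values as $W$ gives the nuclear-norm identity directly, and the Frobenius case follows from trace cyclicity. Note, however, that the paper does not supply its own proof of this lemma; it is quoted verbatim from \cite{Liu2010LRR} (Lemma~7.1) and invoked as a black box in the proof of Lemma~\ref{lemma1}, so there is no in-paper argument to compare against.
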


\begin{lemma}
\label{lemma3}
(\cite{Liu2010LRR1} Lemma 3.1)
Let $A$ and $D$ be square matrices, and let $B$ and $C$ be matrices with compatible dimensions. Then, for any block partitioned matrix $X = \left[ {\begin{array}{*{20}{c}}
A&B\\
C&D
\end{array}} \right]$,
\begin{equation}
{\left\| {\left( {\begin{array}{*{20}{c}}
A&B\\
C&D
\end{array}} \right)} \right\|_*} \ge {\left\| {\left( {\begin{array}{*{20}{c}}
A&0\\
0&D
\end{array}} \right)} \right\|_*} = {\left\| A \right\|_*} + {\left\| D \right\|_*}.
\end{equation}
Note that a similar inequality also holds for the square of the Frobenius norm, $\left\| \cdot \right\|_F^2$ \cite{Ni2010LRRPSD}.
\end{lemma}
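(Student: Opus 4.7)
The plan is to derive the inequality via a \textbf{pinching} argument that combines Lemma \ref{lemma2} with the triangle inequality, and then obtain the stated equality for the block-diagonal matrix by writing down an explicit SVD. Suppose $A\in\mathbf{R}^{p\times q}$ and $D\in\mathbf{R}^{(m-p)\times(n-q)}$ so that the full matrix sits in $\mathbf{R}^{m\times n}$, and introduce the signature matrices
$$U_1=\begin{pmatrix} I_p & 0 \\ 0 & -I_{m-p}\end{pmatrix},\qquad U_2=\begin{pmatrix} I_q & 0 \\ 0 & -I_{n-q}\end{pmatrix},$$
each of which is square and orthogonal. A direct block multiplication yields $U_1 X U_2 = \begin{pmatrix} A & -B \\ -C & D\end{pmatrix}$, so the off-diagonal blocks cancel in the symmetric average
$$\tfrac{1}{2}(X+U_1 X U_2)=\begin{pmatrix} A & 0 \\ 0 & D\end{pmatrix}.$$

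Applying Lemma \ref{lemma2} with $U_1$ on the left and $U_2^T$ on the right gives $\|U_1 X U_2\|_*=\|X\|_*$, and since the nuclear norm satisfies the triangle inequality I immediately obtain
$$\left\|\begin{pmatrix} A & 0 \\ 0 & D\end{pmatrix}\right\|_* \le \tfrac{1}{2}\|X\|_* + \tfrac{1}{2}\|U_1 X U_2\|_* = \|X\|_*,$$
which is the desired inequality. For the equality, I would take thin SVDs $A=U_A\Sigma_A V_A^T$ and $D=U_D\Sigma_D V_D^T$ and assemble
$$\begin{pmatrix} A & 0 \\ 0 & D\end{pmatrix}=\begin{pmatrix} U_A & 0 \\ 0 & U_D\end{pmatrix}\begin{pmatrix}\Sigma_A & 0 \\ 0 & \Sigma_D\end{pmatrix}\begin{pmatrix} V_A & 0 \\ 0 & V_D\end{pmatrix}^T.$$
The outer factors have orthonormal columns, and (after permuting diagonal entries) the middle factor is a nonnegative diagonal matrix whose entries are the union of the singular values of $A$ and $D$; summing them gives $\|A\|_*+\|D\|_*$.

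The Frobenius analog mentioned at the end of the lemma is essentially immediate: $\|X\|_F^2=\|A\|_F^2+\|B\|_F^2+\|C\|_F^2+\|D\|_F^2\ge\|A\|_F^2+\|D\|_F^2$, with equality when $B=C=0$. The main conceptual obstacle is spotting the correct pinching identity; once the signature matrices $U_1,U_2$ are written down, every remaining step is a one-liner using Lemma \ref{lemma2} or elementary linear algebra. Any alternative route, for example via the dual characterization $\|M\|_*=\max_{\|N\|_{\mathrm{op}}\le 1}\langle M,N\rangle$, would additionally require proving the operator-norm identity $\|\mathrm{diag}(N_A,N_D)\|_{\mathrm{op}}=\max(\|N_A\|_{\mathrm{op}},\|N_D\|_{\mathrm{op}})$ and constructing an explicit block-diagonal dual certificate, which is strictly more machinery than the pinching argument sketched above.
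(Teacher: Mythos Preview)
The paper does not supply its own proof of this lemma: it is quoted verbatim as Lemma~3.1 of \cite{Liu2010LRR1} (with the Frobenius remark attributed to \cite{Ni2010LRRPSD}) and then used as a black box inside the proof of Lemma~\ref{lemma1}. So there is nothing in the paper to compare against directly.

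That said, your argument is correct and self-contained. The pinching identity $\tfrac12(X+U_1XU_2)=\mathrm{diag}(A,D)$ is exactly right, and your invocation of Lemma~\ref{lemma2} to get $\|U_1XU_2\|_*=\|X\|_*$ is legitimate since $U_1,U_2$ are square orthogonal (indeed $U_2^T=U_2$, so the form $\|UWV^T\|_*$ in Lemma~\ref{lemma2} applies with $U=U_1$, $V=U_2$). The triangle inequality then finishes the bound, and your block-diagonal SVD gives the equality $\|\mathrm{diag}(A,D)\|_*=\|A\|_*+\|D\|_*$ cleanly. Your proof is actually slightly more general than the stated hypothesis (you allow $A,D$ rectangular, whereas the lemma assumes them square), which is harmless. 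The Frobenius case is, as you note, immediate from the entrywise definition.
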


\begin{lemma}
\label{lemma4} (\cite{Tibshiran1996Lasso, Chen2001BP, Hale2008FixedL1, Parikh2013PA})
Let $A \in {\mathbf{R}^{m \times n}}$ be a given matrix and ${\left\| \cdot \right\|_1}$ be the ${l_1}$-norm. The proximal operator of
\begin{equation}
\mathop {\min }\limits_x \frac{1}{2}\left\| {Ax - b} \right\|_2^2 + \gamma {\left\| x \right\|_1}
\end{equation} is ${S_\gamma }(x)$, i.e. the soft-thresholding operator
\[{S_\gamma }(x) = \left\{{\begin{array}{*{20}{c}}
{x - \gamma , \quad x > \gamma }\\
{x + \gamma , \quad x <  - \gamma }\\
{0,\quad else}
\end{array}} \right. .\]
\end{lemma}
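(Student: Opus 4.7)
The plan is to establish that the minimizer of the scalar quadratic plus $\ell_1$-penalty, namely $\arg\min_x \tfrac{1}{2}\|x-b\|_2^2 + \gamma\|x\|_1$, is given by componentwise soft-thresholding $S_\gamma(b)$. This is the proximal operator of $\gamma\|\cdot\|_1$, which is the nonsmooth step invoked inside proximal/ALM iterations for the Lasso $\min_x \tfrac{1}{2}\|Ax-b\|_2^2+\gamma\|x\|_1$ cited in the lemma (the quadratic fidelity in $A$ and $b$ enters only through the proximal input $b$ at each outer iteration, so the relevant identity to prove is the scalar one).

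First I would exploit separability: since $\tfrac12\|x-b\|_2^2 = \tfrac12\sum_{i=1}^n(x_i-b_i)^2$ and $\gamma\|x\|_1=\gamma\sum_{i=1}^n|x_i|$, the problem decouples into $n$ independent scalar subproblems
\[
\min_{x_i\in\mathbb{R}} \; f_i(x_i) \;:=\; \tfrac12 (x_i-b_i)^2 + \gamma |x_i|.
\]
Each $f_i$ is strictly convex (thanks to the quadratic) and coercive, so it has a unique minimizer $x_i^\ast$ characterized by the subgradient optimality condition $0\in\partial f_i(x_i^\ast)$. Recall that $\partial|x|=\{\mathrm{sign}(x)\}$ for $x\neq 0$ and $\partial|x|\big|_{x=0}=[-1,1]$.

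Next I would split into three cases on the sign of $x_i^\ast$. If $x_i^\ast>0$, smoothness of $|\cdot|$ away from $0$ gives $x_i^\ast-b_i+\gamma=0$, i.e.\ $x_i^\ast=b_i-\gamma$, consistent with positivity exactly when $b_i>\gamma$. Symmetrically, $x_i^\ast<0$ forces $x_i^\ast=b_i+\gamma$, valid precisely when $b_i<-\gamma$. Finally, $x_i^\ast=0$ requires $0\in -b_i+\gamma[-1,1]$, i.e.\ $|b_i|\le\gamma$. These three regimes on $b_i$ partition $\mathbb{R}$ disjointly and reproduce the piecewise definition of $S_\gamma$, so $x_i^\ast=S_\gamma(b_i)$; stacking over $i$ yields the stated vector identity.

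The principal subtlety is the nondifferentiable point $x_i=0$, which forces use of the convex subdifferential rather than ordinary derivatives, together with the check that the three branches are mutually exclusive and exhaustive on $b_i$. Beyond that there is no real obstacle: strict convexity of each $f_i$ immediately yields existence and uniqueness, and separability reduces the whole claim to the scalar analysis above.
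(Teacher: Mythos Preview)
Your proof is correct and is the standard subdifferential argument for the proximal operator of $\gamma\|\cdot\|_1$. Note, however, that the paper does not actually give its own proof of this lemma: it is stated with citations to the literature and then used as a tool in the proof of Lemma~\ref{lemma1}. So there is no ``paper's proof'' to compare against here; your derivation via separability and the three-case subgradient analysis is exactly the classical one found in the cited references. Your opening clarification---that the relevant identity is the prox of $\gamma\|\cdot\|_1$ applied to a given input, not a closed-form solution of the full Lasso in $A$---is also a fair reading of a somewhat loosely worded statement, and matches how the lemma is actually invoked (applied to the diagonal matrix $\Sigma_r$ in the proof of Lemma~\ref{lemma1}).
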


\begin{proof}(of Lemma \ref{lemma1} )
Let ${W^*}$ be the unique minimizer. Then,
\begin{equation}
\begin{split}
& \left\| {{W^*} - Q} \right\|_F^2 = \frac{1}{2}\left( {\left\| {{W^*} - Q} \right\|_F^2 + \left\| {{W^*} - {Q^T}} \right\|_F^2} \right) \\
& = \frac{1}{2}\left( {\left\| {{W^*} - {{(Q + {Q^T})} \mathord{\left/ {\vphantom {{(Q + {Q^T})} 2}} \right. \kern-\nulldelimiterspace} 2}} \right\|_F^2} \right) + {\rm F}(Q),
\end{split}
\end{equation}
where ${\rm F}(Q)$ are entirely independent of ${W^*}$. Therefore, the original optimization can be converted to
\begin{equation}
\label{eq:LRRV1}
{W^*} = \arg \mathop { \min }\limits_W \frac{1}{\mu }{\left\| W \right\|_*} + \frac{1}{2}\left\| {W - \widetilde Q} \right\|_F^2, W = {W^T},
\end{equation}
where $\widetilde Q = {{\left(Q + {Q^T}\right)} \mathord{\left/
 {\vphantom {{(Q + {Q^T})} 2}} \right.
 \kern-\nulldelimiterspace} 2}$.

 Let $\widetilde Q = {U_r}{\Sigma _r}V_r^T$ be the skinny SVD of a given symmetric matrix $\widetilde Q$ of rank $r$, where ${\Sigma _r} = diag({\sigma _1},{\sigma _2},...,{\sigma _r})$ with $\{ r:{\sigma _r} > \frac{1}{\mu }\} $ are positive singular values. More precisely, both ${U_r}$ and ${V_r}$ have orthogonal columns, i.e., $U_r^T{U_r} = I$ and $V_r^T{V_r} = I$. Let $W = {U_r}\widetilde WV_r^T$. By Lemma \ref{lemma2}, Problem \eqref{eq:LRRV1} is equivalent to
 \begin{equation}
\label{eq:LRRV2}
{\widetilde W^*} = \arg \mathop { \min }\limits_{\widetilde W} \frac{1}{\mu }{\left\| {\widetilde W} \right\|_*} + \frac{1}{2}\left\| {\widetilde W - {\Sigma _r} } \right\|_F^2,\widetilde W = {\widetilde W^T}.
\end{equation}

Let ${\widetilde W^*}$ be an optimizer of Problem \eqref{eq:LRRV2}, then ${\widetilde W^*}$ must be a diagonal matrix. Assume ${\widetilde W_0}$ is a non-diagonal matrix, i.e., there exists nonzero entries in the non-diagonal of ${\widetilde W^*}$. Let $f\left(\widetilde W\right) = \frac{1}{\mu }{\left\| {\widetilde W} \right\|_*} + \frac{1}{2}\left\| {\widetilde W - {\Sigma _r} } \right\|_F^2$. By Lemma \ref{lemma3} and the strict decreasing property of the square of the Frobenius norm, we can always derive ${\widetilde W_1}$ by removing the non-diagonal entries of ${\widetilde W_0}$ such that $f\left({\widetilde W_1}\right) < f\left({\widetilde W_0}\right)$. This contradicts the non-diagonal matrix assumption.

Let $\widetilde W = diag({w_1},{w_2},...,{w_r})$. For diagonal matrices, the sum of the singular values is equal to the sum of the absolute values of the diagonal elements, i.e., the ${l_1}$-norm. The optimization of Problem \eqref{eq:LRRV2} is equivalent to
 \begin{equation}
{\widetilde W^*} = \arg \mathop { \min }\limits_{\widetilde W} \frac{1}{\mu }{\left\| {\widetilde W} \right\|_1} + \frac{1}{2}\left\| {\widetilde W - {\Sigma _r} } \right\|_F^2,\widetilde W = {\widetilde W^T}.
\end{equation}

By Lemma \ref{lemma4}, the optimal solution to this problem is given by ${\widetilde W^*} = {S_{\frac{1}{\mu }}}\left({\Sigma _r}\right)$. Finally, we get ${W^*} = {U_r}\left({\Sigma _r} - \frac{1}{\mu } \cdot {{\rm I}_r}\right)V_r^T$.

\end{proof}

\subsection{Building an affinity graph based on the symmetric low-rank matrix}
\label{sec:SM}

The critical task of subspace clustering in this paper mainly focus on how to learn a good affinity matrix in a time efficient manner. Using the optimal symmetric low-rank matrix ${Z^*}$ from Problem \eqref{eq:LRRSC}, we need to construct an affinity graph $G = (V,E)$ associated with the corresponding adjacency matrix ${\rm{W  =  \{ }}{{\rm{w}}_{ij}}|i,j \in V{\rm{\} }}$, where $V = \{ {v_1},{v_2}...,{v_n}\}$ is a set of vertices and $E = \{ {e_{ij}}|i,j \in V\}$ is a set of edges. Note that $\{ {w_{ij}}\}$ represents the weight of edge ${e_{ij}}$ that associates vertices $i$ and $j$. A fundamental problem involving affinity graph construction is how to determine the adjacency matrix $W$. Because each sample is represented by the others, each element ${z_{ij}}$ of the matrix ${Z^*}$ naturally characterizes the contribution of the sample ${x_j}$ to the reconstruction of sample ${x_i}$.

A straightforward method is to use the symmetric low-rank matrix ${Z^*}$ as the adjacency matrix $W$ for spectral clustering. It can mostly achieve satisfied results. However, a few entries of the matrix are sensitive to the noise or outliers in low-rank representation. Those entries cannot reflect the real relationships of the pairwise points. The matrix ${Z^*}$ cannot adequately represent the relationship between samples when there are grossly corrupted observations. Therefore, the straightforward use of the matrix ${Z^*}$ as a pairwise affinity relationship between data points inevitably results in loss of information, and it does not fully capture the intrinsic correlation of data points \cite{Agarwal2005Clustering, Zhou2006HyperGraph}. Consequently, the clustering performance may seriously decline because of a lack of robustness.

Several existing methods that use the angular information of original samples to measure the relationship between samples were proposed in literatures \cite{Lauer2009SC, Luxburg2007SC}. However, a lack of robustness is inevitable because of corrupted samples. To enhance the ability of the low-rank representation to separate samples in different subspaces, a reasonable strategy is to derive an affinity graph with enhanced clustering information from the matrix ${Z^*}$. This preserves the subspace structures of high-dimensional data, and recovers the clustering relations among samples. If any two data points $x_i$ and $x_j$ are close in the intrinsic geometry of the data distribution, then the representations of these two points, namely, $z_i$ and $z_j$ with respect to the same basis $X$, are close to each other. It has been demonstrated in recent studies of manifold learning theory \cite{Deng2011GRNMF}. To remove the effect of various noises in the high-dimensional data, we employed a mechanism of exploiting the angular information of its principal directions instead of the low-rank representation itself. As the coefficient matrix is low-rank, the angular information can hardly be effected by the few error entries of rows or column in the low-rank coefficient matrix. In other words, the elements of the affinity matrix are more robust to the noise or outliers than those of the low-rank coefficient matrix. Hence, the angular information of its principal directions is a good surrogate for symmetric low-rank representation.

Instead of directly using $\left| {{Z^*}} \right| + \left| {{{\left({Z^*}\right)}^T}} \right|$ to define an affinity graph, we consider the mechanism driving the construction of the affinity graph from the matrix ${Z^*}$. We consider ${Z^*}$ with the skinny SVD ${U^*}{\sum ^*}{\left({V^*}\right)^T}$. Note that ${U^*}$ and ${V^*}$ are the orthogonal bases of the column and row space of the matrix ${Z^*}$. Inspired by \cite{Lauer2009SC, Liu2010LRR}, we assign each column of ${U^*}$ a weight by multiplying it by ${({\sum ^*})^{{1 \mathord{\left/ {\vphantom {1 2}} \right. \kern-\nulldelimiterspace} 2}}}$, and each row of ${\left({V^*}\right)^T}$ a weight by multiplying it by ${({\sum ^*})^{{1 \mathord{\left/ {\vphantom {1 2}} \right. \kern-\nulldelimiterspace} 2}}}$. Then, we can define $M = {U^*}{\left({\sum ^*}\right)^{{1 \mathord{\left/
 {\vphantom {1 2}} \right.
 \kern-\nulldelimiterspace} 2}}},N = {({\sum ^*})^{{1 \mathord{\left/
 {\vphantom {1 2}} \right.
 \kern-\nulldelimiterspace} 2}}}{\left({V^*}\right)^T}$. The product of two matrices can represent ${Z^*}$, i.e., ${Z^*} = MN$.

Because ${Z^*}$ is a symmetric low-rank matrix, the absolute values of the columns of ${U^*}$ and ${V^*}$ always agree. The columns of ${U^*}$ or ${V^*}$ can span the principal directions of the symmetric low-rank matrix ${Z^*}$, and the diagonal entries reflect the relative importance of the coefficient matrix ${Z^*}$ in each of these directions. Therefore, we use angular information from all of the row vectors of matrix $M$, or all of the column vectors of matrix $N$, instead of ${Z^*}$, to define an affinity matrix $W$ as follows:
\begin{equation}
{[W]_{ij}} = {\left( {\frac{{m_i^T{m_j}}}{{{{\left\| {{m_i}} \right\|}_2}{{\left\| {{m_j}} \right\|}_2}}}} \right)^{2\alpha }}
\quad or \quad
{[W]_{ij}} = {\left( {\frac{{n_i^T{n_j}}}{{{{\left\| {{n_i}} \right\|}_2}{{\left\| {{n_j}} \right\|}_2}}}} \right)^{2\alpha }},
\end{equation}
where ${m_i}$ and ${m_j}$ denote the $i$-th and $j$-th row of the matrix $M$, or ${n_i}$ and ${n_j}$ denote the $i$-th and $j$-th column of the matrix $N$. The values of the affinity matrix $W$ can be distributed on the unit ball by the ${l_2}$-norm of data vectors. Consequently, the affinity matrix $W$ preserves angular information between data vectors but removes length information. By using ${( \cdot )^{2\alpha }}$ we ensure that the values of the affinity matrix $W$ are positive inputs for the subspace clustering, and also increase the separation of points of different groups because of the geometry of the ${l_2}$-norm ball. Finally, we apply spectral clustering algorithms such as NCuts \cite{Shi2000Ncuts} to segment the samples into a given number of clusters. Algorithm \ref{alg:LRRSCAlg2} summarizes the complete subspace clustering algorithm of LRRSC.

\begin{algorithm}[!htbp]
\renewcommand{\algorithmicrequire}{\textbf{Input:}}
\renewcommand\algorithmicensure {\textbf{Output:} }
\caption{The LRRSC algorithm}
\label{alg:LRRSCAlg2}
\begin{algorithmic}[1]
\REQUIRE ~~\\
data matrix $X = [{x_1},{x_2},...,{x_n}] \in {\mathbf{R}^{m \times n}}$, number $k$ of subspaces, regularized parameters $\lambda > 0, \alpha > 0 $

\STATE Solving the following problem by Algorithm \ref{alg:LRRSCAlg1}:

\begin{equation*}
\mathop {\min }\limits_{Z,E} {\left\| Z \right\|_*} + \lambda {\left\| E \right\|_{2,1}} \quad s.t. \quad  X = AZ + E, Z = {Z^T},
\end{equation*}

and obtain the optimal solution $({Z^*},{E^*})$.

\STATE Compute the skinny SVD ${Z^*}={U^*}{\sum ^*}{({V^*})^T}$.
\STATE Calculate $M = {U^*}{({\sum ^*})^{{1 \mathord{\left/ {\vphantom {1 2}} \right. \kern-\nulldelimiterspace} 2}}}$ or $N = {({\sum ^*})^{{1 \mathord{\left/ {\vphantom {1 2}} \right. \kern-\nulldelimiterspace} 2}}}{({V^*})^T}$.
\STATE Construct the affinity graph matrix $W$, i.e.,
\begin{equation*}
{[W]_{ij}} = {\left( {\frac{{m_i^T{m_j}}}{{{{\left\| {{m_i}} \right\|}_2}{{\left\| {{m_j}} \right\|}_2}}}} \right)^{2\alpha }}
\quad or \quad
{[W]_{ij}} = {\left( {\frac{{n_i^T{n_j}}}{{{{\left\| {{n_i}} \right\|}_2}{{\left\| {{n_j}} \right\|}_2}}}} \right)^{2\alpha }}.
\end{equation*}
\STATE Apply $W$ to perform NCuts.

\ENSURE ~~\\ The clustering results.
\end{algorithmic}
\end{algorithm}

\subsection{Pursuing a symmetric low-rank representation through a closed form solution}
\label{sec:comparisons}

The existing methods, e.g., a sparsity constraint and rank minimization, for obtaining reasonable data representation to characterize the correlation structure of high-dimensional data have high computational cost. Specifically, LRRSC typically requires iterative SVD operations when solving the nuclear norm optimization problem. Therefore, it suffers from a high computation cost.

In this section, we present an efficient variant of LRRSC, namely eLRRSC, to improve the scalability of LRRSC, which attempts to learn the symmetric low-rank representation by a closed form solution without iterative SVD operations. In particular, the alternative learning scheme is composed of three steps.

First, eLRRSC uses a collaborative representation with regularized least square for symmetric representation, which is also adopted in SLRR \citet{Chen2016SLRR}. The optimization problem is formalized as follows:
\begin{equation}\label{eq:LRRRLS1}
\mathop {\min }\limits_Z \left\| Z \right\|_F^2 + \frac{\lambda }{2}\left\| {X - XZ} \right\|_F^2 \quad s.t. \quad X = XZ+E,
\end{equation}
The above problem has a closed form solution:
\begin{equation}
Z = {\left({X^T}X + \lambda \cdot I\right)^{ - 1}}{X^T}X,
\end{equation}
where $\lambda > 0$ is a parameter and $I$ is the identity matrix of size $n \times n$.

By utilizing the self-expressiveness property of the data,  we consider a general model of data representation:
\begin{equation} \label{eq:generalmodel}
\min f\left(Z\right) \quad s.t. \quad X = XZ + E,
\end{equation}
where $f(Z)$ is a matrix function, i.e., $\left\|  \cdot  \right\|_F^2$ or $\left\| Z  \right\|{_*}$, and $E$ is an error term. The optimal solution ${Z^*}$ is a particular representation of data $X$. Then we can write
\begin{equation} \label{eq:generalequation}
A = XZ,
\end{equation}
where each column of the data matrix $A = \{ {a_1},{a_2},...,{a_n}\} \in {\mathbf{R}^{m \times n}}$ represents a corrected sample, i.e., ${a_i} = X{z_i}$. In other words, each corrected sample in a union of subspaces can be efficiently reconstructed by other original samples in the dataset. The underlying assumption for the success of the eLRRSC algorithm is that the samples are drawn from the union of low-dimensional subspaces. As a result, the matrix $A$ should be low-rank. Denote the ranks of $A$, $X$ and $Z$ by $rank(A)$, $rank(X)$ and $rank(Z)$, respectively. Therefore $Z$ is low-rank as $rank(A) \le min(rank(X),rank(Z))$. Consequently, we need to construct the symmetric low-rank representation instead of the collaborative representation to characterize the correlation structure of high-dimensional data. As mentioned above, the symmetric low-rank representation of high-dimensional data play the essential role for preserving the subspace structures. Hence, we obtain an alternative symmetric low-rank ${Z^{'}}$ instead of $Z$ from a collaborative representation of high-dimensional data by solving the following optimization problem:
\begin{equation}\label{eq:LRRRLS2}
\mathop {\min }\limits_W {\left\| {{Z^{'}}} \right\|_*} + \frac{\mu }{2}\left\| {{Z^{'}} - Z} \right\|_F^2\quad s.t.\quad {Z^{'}} - Z = E,{Z^{'}} = {Z^{'}}^T.
\end{equation}
The above problem can be solved by Lemma \ref{lemma1}. Given the assumption that high-dimensional data are approximately drawn from a union of multiple subspaces, it is reasonable that eLRRSC considers a symmetric low-rank representation of high-dimensional data as a good surrogate for the collaborative representation. It is clear that eLRRSC obtained a symmetric low-rank representation through a closed form solution. However, eLRRSC does not pursue a symmetric low-rank matrix for the low-rank matrix recovery or completion. Instead, it mainly focuses on only representations of data, which is further applied to evaluate the membership between samples.

Finally, we make use of the angular information of its principal directions to get an affinity matrix after obtaining the symmetric low-rank ${Z^{\ast}} $. The angular information can be applied in the spectral clustering algorithm, such as NCuts \cite{Shi2000Ncuts}, to produce the final clustering results. Algorithm \ref{alg:eLRRSC} summarizes the complete subspace clustering algorithm of eLRRSC. The purpose of the first two steps is to pursue a symmetric low-rank representation of original data with respect to the same basis $X$ in Algorithm \ref{alg:eLRRSC}. By making use of a closed-form solution, eLRRSC effectively provides an alternative scheme instead of LRRSC to seek a low-rank matrix, which preserves the intrinsically geometrical structure of the memberships of samples.

\begin{algorithm}[!htbp]
\renewcommand{\algorithmicrequire}{\textbf{Input:}}
\renewcommand\algorithmicensure {\textbf{Output:} }
\caption{The eLRRSC algorithm}
\label{alg:eLRRSC}
\begin{algorithmic}[1]
\REQUIRE ~~\\
data matrix $X = [{x_1},{x_2},...,{x_n}] \in {\mathbf{R}^{m \times n}}$, number $k$ of subspaces, regularized parameters $\lambda > 0$, $\mu > 0$ and $\alpha > 0 $

\STATE Solving the following problem:
\begin{equation*}
\mathop {\min }\limits_Z \left\| Z \right\|_F^2 + \frac{\lambda }{2}\left\| {X - XZ} \right\|_F^2 \quad s.t. \quad X = XZ+E,
\end{equation*}
and obtain the optimal solution ${Z} = {\left({X^T}X + \lambda \cdot I\right)^{ - 1}}{X^T}X$.

\STATE Solving the following problem by Lemma \ref{lemma1}:
\begin{equation*}
\mathop {\min }\limits_W {\left\| {{Z^{'}}} \right\|_*} + \frac{\mu }{2}\left\| {{Z^{'}} - Z} \right\|_F^2\quad s.t.\quad {Z^{'}} - Z = E,{Z^{'}} = {Z^{'}}^T
\end{equation*}
and obtain the optimal solution ${Z^{'} = U_r}SV_r^T$, where $Z = U\Sigma {V^T}$, $S={\Sigma _r} - \frac{1}{\mu } \cdot {I_r}$, ${\Sigma _r} = diag({\sigma _1},{\sigma _2},...,{\sigma _r})$ with $\{ r:{\sigma _r} > \frac{1}{\mu }\} $ are positive singular values.

\STATE Calculate $M = {U_r}{S^{{1 \mathord{\left/ {\vphantom {1 2}} \right. \kern-\nulldelimiterspace} 2}}}$ or $
N = {S^{{1 \mathord{\left/
 {\vphantom {1 2}} \right.
 \kern-\nulldelimiterspace} 2}}}V_r^T$

\STATE Construct the affinity graph matrix $W$, i.e.,
\begin{equation*}
{[W]_{ij}} = {\left( {\frac{{m_i^T{m_j}}}{{{{\left\| {{m_i}} \right\|}_2}{{\left\| {{m_j}} \right\|}_2}}}} \right)^{2\alpha }}
\quad or \quad
{[W]_{ij}} = {\left( {\frac{{n_i^T{n_j}}}{{{{\left\| {{n_i}} \right\|}_2}{{\left\| {{n_j}} \right\|}_2}}}} \right)^{2\alpha }}.
\end{equation*}
\STATE Apply $W$ to perform NCuts.

\ENSURE ~~\\ The clustering results.
\end{algorithmic}
\end{algorithm}

\subsection{Relationship between eLRRSC and SLRR}
\label{sec:slrr}

We emphasize that eLRRSC is the follow-up research based on our previous work, i.e., SLRR \cite{Chen2016SLRR}. First, eLRRSC and SLRR share the same collaborative representation with regularized least square. Then, both of them utilize the angular information of principal directions of the symmetric low-rank representation to construct the affinity matrix for the spectral clustering algorithm. However, there is a major difference in constructing a desirable low-rank symmetric matrix. In fact, each of them employ entirely different idea to obtain their own symmetric low-rank matrices.

To obtain the symmetric low-rank matrix, SLRR attempts to pursue an alternative low-rank matrix instead of the original data matrix through existing low-rank matrix recovery techniques. To achieve competitive subspace clustering performance, SLRR need to elaborate a proper alternative low-rank matrix, which highly depends on the choice of low-rank matrix recovery techniques. This means that SLRR requires more prior knowledge of original data, i.e., types of noise. Besides, the complexity of the low-rank matrix recovery technique adopted by SLRR also may lead to high computational cost.

On the other hand, eLRRSC obtains the symmetric low-rank matrix from the collaborative representation under the assumption that high-dimensional data involves with the multiple subspace structures, without considering the noise types of original data. Hence, the overall computational cost of eLRRSC can be effectively guaranteed. Besides, the key observation is that the intrinsically geometrical structure of the samples' memberships are preserved in the symmetric low-rank representation, which is closely related to the new basis, i.e., the original data. This shows that both of them essentially focus on different construction methods on symmetric low-rank matrices respectively although a closed form solution improves the computational efficiency of large-scale subspace clustering.

\subsection{Convergence properties and computational complexity analysis}
\label{sec:cpcca}

The convergence properties of the exact ALM algorithm for a smooth objective function have been generally proven in \cite{Lin2011ALM}. The inexact variation of ALM has been extensively studied and generally converges well. Algorithm \ref{alg:LRRSCAlg2} performs well in practical applications. We assume that the size of $X$ is $m \times n$, where $X$ has $n$ samples and each sample has $m$ dimensions. The computational complexity of the first step in Algorithm \ref{alg:LRRSCAlg1} is $O({n^3})$ because it requires computing the SVD of a $n \times n$ matrix. The overall computational complexity of Algorithm \ref{alg:LRRSCAlg1} is $O(2{n^3} + m{n^2})$. When $n > m$, the computational complexity of Algorithm  \ref{alg:LRRSCAlg1} can be considered to be $O({n^3})$. The computational complexity of Algorithm \ref{alg:LRRSCAlg2} is $O(t{n^3}) + O({n^3})$, where $t$ is the number of iterations. Therefore, the final overall complexity of Algorithm \ref{alg:LRRSCAlg2} is $O(t{n^3})$. In our experiments, there were always less than 335 iterations.

In Algorithm \ref{alg:eLRRSC}, the computational complexity of the first two steps and the last two steps are $O({n^3})$ and $O({n^2})$, respectively. Therefore, the general complexity is $O({n^3})$. With this theoretical result, we can say that eLRRSC is significantly computational efficient than LRRSC.

\section{Experiments}
\label{sec:Experiments}

In this section, we evaluated the performance of the proposed LRRSC algorithm and its variant (eLRRSC) using a series of experiments on publicly available databases, the extended Yale B, Hopkins 155 and penguin databases (the Matlab source code of our method is available at http://www.machineilab.org/users/chenjie). We compared the performance of LRRSC with several state-of-the-art subspace clustering algorithms (LRR \cite{Liu2010LRR}, LRR-PSD \cite{Ni2010LRRPSD}, SSC \cite{Elhamifar2013SSC}, low rank subspace clustering (LRSC) \cite{Favarob2011LRSC, Vidala2013LRSC} and SLRR \cite{Chen2016SLRR}. As there is no source code publicly available for LRR-PSD, we implemented the LRR-PSD algorithm according to its theory. For LRSC, we chose the noisy data version of (${P_3}$) as its instance. For the other algorithms, we used the source codes provided by their authors.

\begin{table*}[!htbp]
\small
\setlength{\abovecaptionskip}{0pt}
\setlength{\belowcaptionskip}{10pt}
\setlength{\tabcolsep}{3pt}
\centering
\caption{Parameter settings of different algorithms on face clustering. The parameter $\lambda$ is used as a trade-off between low rankness (or sparsity) and the effect of noise (LRRSC, SLRR, LRR, LRR-PSD, SSC). The parameter $\alpha$ enhances the separate ability of low-rank representation between samples in different subspaces used by LRRSC. For SLRR, $n$ is the number of subspaces, i.e., the number of subjects. For LRSC, $\tau$ and  $\lambda$ are two parameters weighting noise.}
\label{parmater1}
\begin{tabular}{|c|c|c|}
\hline
Method & Scenario 1 &  Scenario 2 \\
\hline
 LRRSC & $ \lambda = 0.2, \alpha = 4 $& $ \lambda = 0.1,\alpha = 3 $ \\
\hline
 eLRRSC & $ \lambda = 35, \mu = 1{e^{ - 3}}, \alpha = 3 $ & $ \lambda = 40, \mu = 0.1, \alpha = 2 $ \\
\hline
SLRR & $\alpha = 3, \lambda = 30 $ & $\alpha = 3, \lambda = 1, r = 10n $ \\
\hline
  LRR & \multicolumn{2}{c|}{$ \lambda = 0.18, \alpha = 2 $ } \\
\hline
 LRR-PSD & $ \lambda = 0.2, \alpha = 4 $ & $ \lambda = 0.1,\alpha = 3 $ \\
\hline
LRSC & $\tau  = 0.4, \lambda  = 0.045, \alpha = 2 $ & $\tau  = 0.045, \lambda  = 0.045, \alpha = 3$ \\
\hline
  SSC & ${\lambda _e} = {{8} \mathord{\left/
 {\vphantom {{800} {{\mu _e}}}} \right.
 \kern-\nulldelimiterspace} {{\mu _e}}}$  & ${\lambda _e} = {{20} \mathord{\left/
 {\vphantom {{800} {{\mu _e}}}} \right.
 \kern-\nulldelimiterspace} {{\mu _e}}}$ \\
\hline
\end{tabular}
\end{table*}

\begin{table*}[!htbp]
\small
\setlength{\abovecaptionskip}{0pt}
\setlength{\belowcaptionskip}{10pt}
\setlength{\tabcolsep}{2pt}
\centering
\caption{Parameter settings of different algorithms on motion segmentation.}
\label{parmater2}
\begin{tabular}{|c|c|c|c|}
\hline
\multirow{2}{*}{Method} & \multicolumn{2}{c|}{The Hopkins 155 motion database} & The penguin motion database\\
\cline{2-4}
 &  Scenario 1 & Scenario 2 & Scenario 1\\
\hline
 LRRSC & $\lambda = 3.3, \alpha = 2$ & $\lambda = 3, \alpha = 3$ & $\lambda = 0.05, \alpha = 4$ \\
\hline
 eLRRSC & $ \lambda = 5{e^{ - 3}}, \mu = 0.2, \alpha = 2 $  &  $ \lambda = 5{e^{ - 3}}, \mu = 0.1, \alpha = 2 $ & $\lambda = 5.5, \alpha = 4, \mu = 30 $ \\
\hline
SLRR & $\alpha = 2, \lambda = 5{e^{ - 3}}$ & {$ \alpha = 2, \lambda = 5{e^{ - 3}}, r=4n $} & $ \lambda = 5{e^{ - 3}}, \alpha = 4 $ \\
\hline
  LRR & \multicolumn{2}{c|}{$ \lambda = 4, \alpha  = 2 $} & $ \lambda = 0.1, \alpha  = 2 $ \\
\hline
 LRR-PSD & $\lambda = 3.3, \alpha = 2$ & $\lambda = 3, \alpha = 3$ & $\lambda = 0.05, \alpha = 4$ \\
\hline
LRSC & \multicolumn{2}{c|}{$\tau  = 420, \lambda  = 5000, \alpha  = 2$} & $\tau  = 5, \lambda  = 10, \alpha = 2 $\\
\hline
  SSC & \multicolumn{2}{c|}{${\lambda _z} = {{800} \mathord{\left/
 {\vphantom {{800} {{\mu _z}}}} \right.
 \kern-\nulldelimiterspace} {{\mu _z}}}$} & ${\lambda _z} = {{240} \mathord{\left/
 {\vphantom {{240} {{\mu _z}}}} \right.
 \kern-\nulldelimiterspace} {{\mu _z}}}$\\
\hline
\end{tabular}
\end{table*}

We evaluate the performance of above algorithms by comparing subspace clustering errors:
\begin{equation}
error = \frac{{{N_{error}}}}{{{N_{total}}}},
\end{equation}
where ${{N_{error}}}$ represents the number of misclassified points, and ${{N_{total}}}$ is the total number of points.
LRRSC requires two parameters, $\lambda$ and $\alpha$. Empirically speaking, the parameter $\lambda$ should be relatively large if the data are ``clean'', or smaller if they are contaminated with small noises, and the parameter $\alpha$ ranges from 2 to 4. For the other algorithms, we used the parameters given by the respective authors, or manually tuned the parameters to find the best results. The parameters for these methods are shown in Table \ref{parmater1} and \ref{parmater2}. All the algorithms are implemented by Matlab R2011b, and all experiments are performed on a Windows platform with Intel Core i5-2300 CPU and 16 GB memory.

\subsection{Experiments on face clustering}

Given a collection of face images from multiple individuals, which have various illumination conditions and expression, we'd like to cluster images according to their individuals. Since this set of face images lie close to a union of 9-dimensional subspaces \cite{Basri2003}, the face clustering problem can be boiled down to image clustering problem over a union of subspaces.

In this section, we consider the Extended Yale B Database \cite{Lee05YALEB, GeBeKr01YALEB} for the face clustering problem. This database consists of 2414 frontal images from 38 individuals. There are approximately $59-64$ images available for each person, shooted under various laboratory-controlled lighting conditions. Fig. \ref{fig:yaleba} shows some sample images. In order to improve the efficiency of the experiments, without losing generality, we first resize all images to $48 \times 42$ pixels, therefore each image can be regarded as a vector of 2016 dimensions. In the rest of this section, we will consider two different experimental scenarios to evaluate the performance of our proposed methods.


\begin{figure*}[!htbp]
\centering
\subfigure[The original sample images]{
\label{fig:yaleba}
\includegraphics[width=9cm]{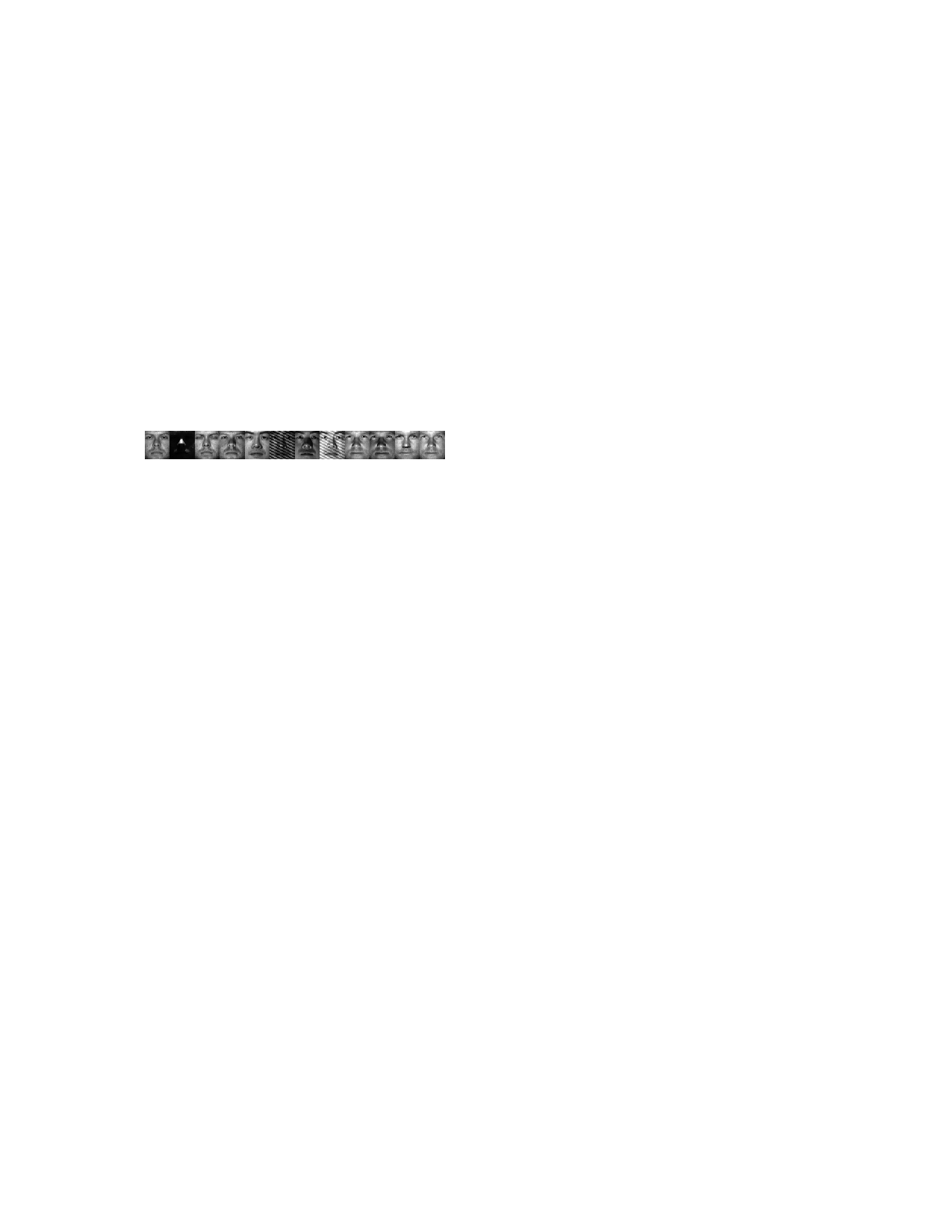}}
\subfigure[The corrupted sample images with the 10\% random pixel corruptions]{
\label{fig:yalebb}
\includegraphics[width=9cm]{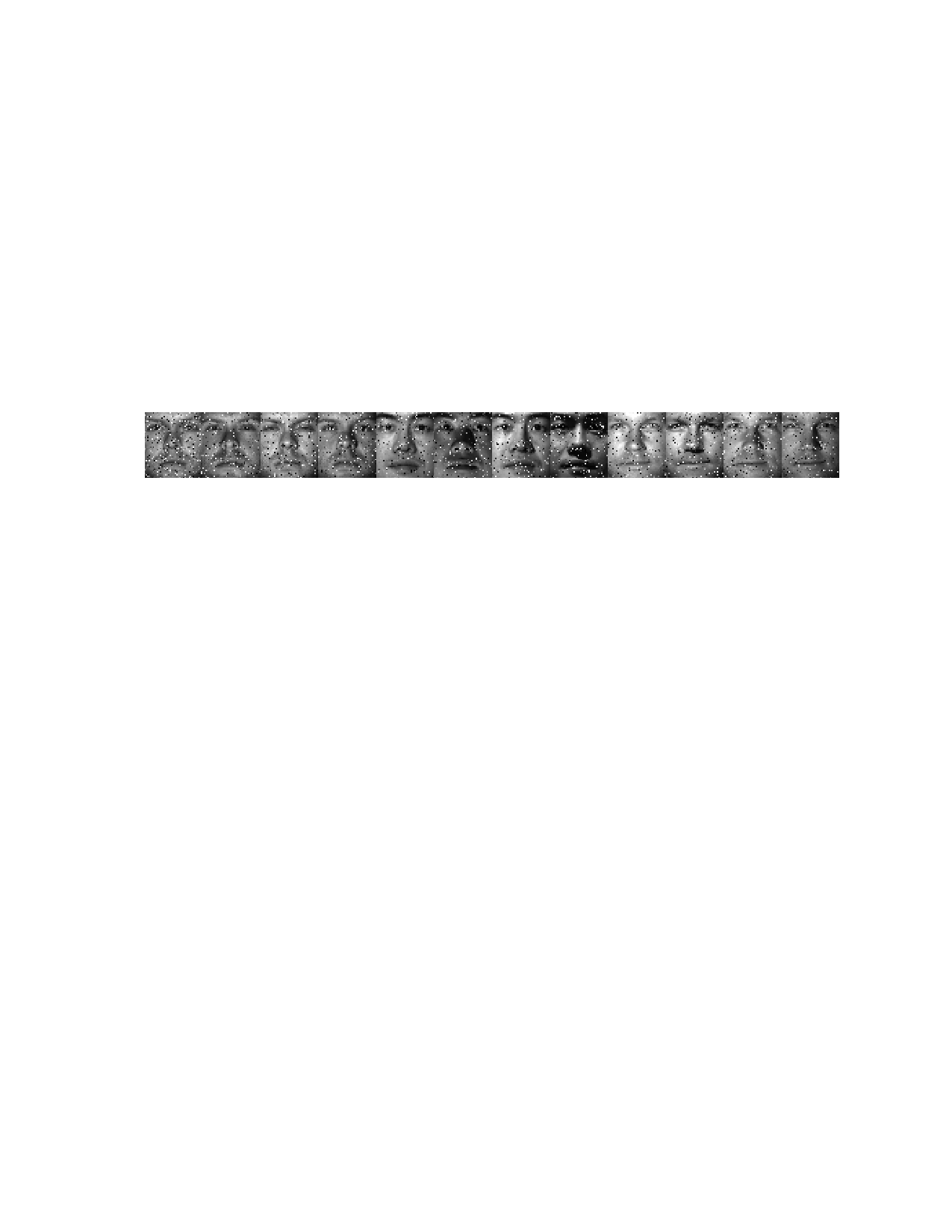}}
\label{fig:yaleb}
\caption{Example images of multiple individuals from the Extended Yale B database.}
\end{figure*}

\texttt{1. First experimental scenario: } We used the first 10 classes of the Extended Yale B Database, as in \cite{Liu2010LRR1}. This subset of the database contains 640 frontal face images from 10 subjects. To compare the clustering errors between different approaches, we first used the raw pixel values without pre-processing, considering each image as a data vector of 2016 dimensions. Then, we applied PCA to pre-process these face images using 100 and 500 feature dimensions.

We consider 640 frontal face images belonging to 10 subjects. Besides using these raw images, we also apply PCA to project these images to 100 and 500 dimension feature spaces, respectively.

\begin{figure*}[!htbp]
\begin{minipage}[t]{0.33\linewidth}
\centering
\subfigure[]{
\label{fig:face:a} 
\includegraphics[width=6cm]{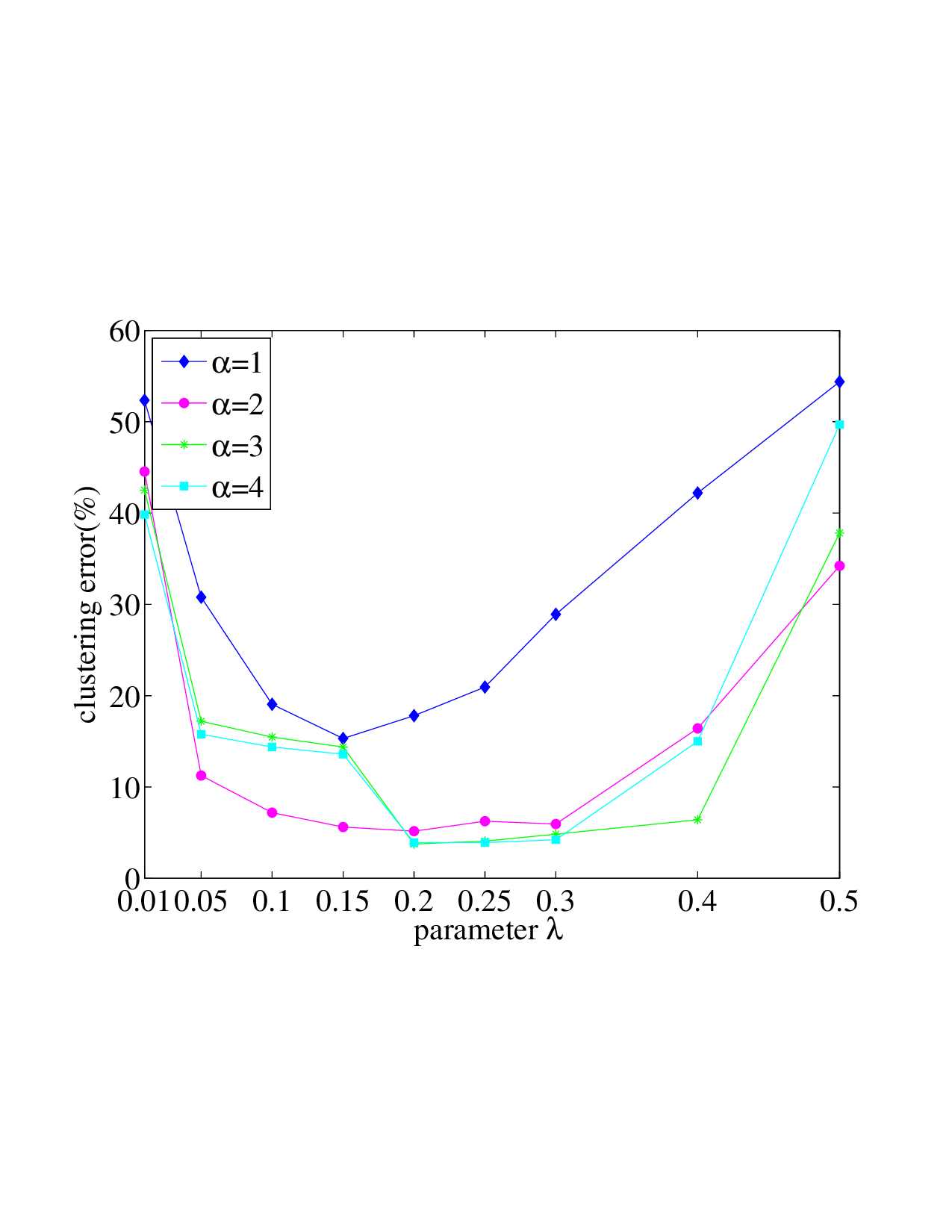}}
\end{minipage}%
\begin{minipage}[t]{0.33\linewidth}
\centering
\subfigure[]{
\label{fig:face:b} 
\includegraphics[width=6cm]{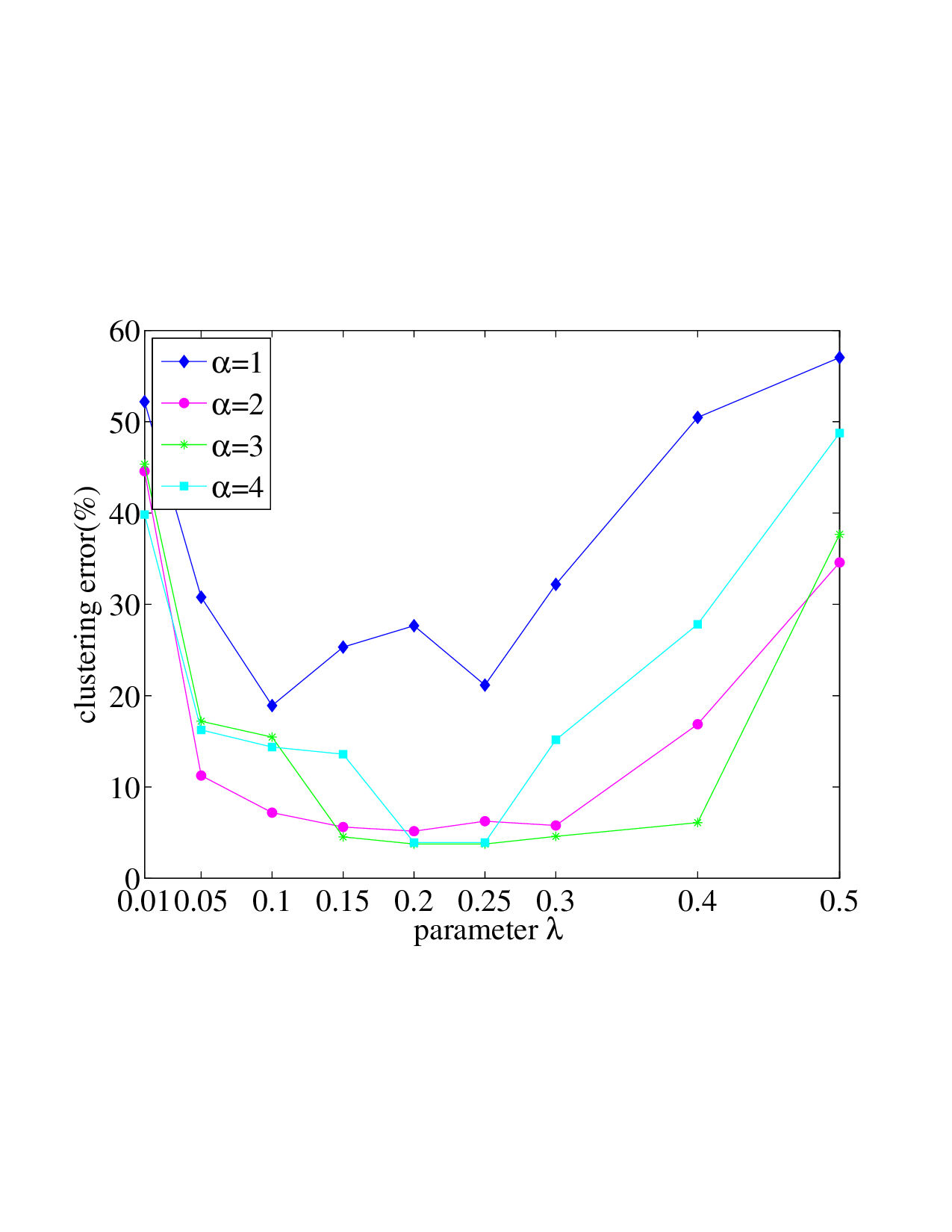}}
\end{minipage}
\begin{minipage}[t]{0.33\linewidth}
\centering
\subfigure[]{
\label{fig:face:c} 
\includegraphics[width=6cm]{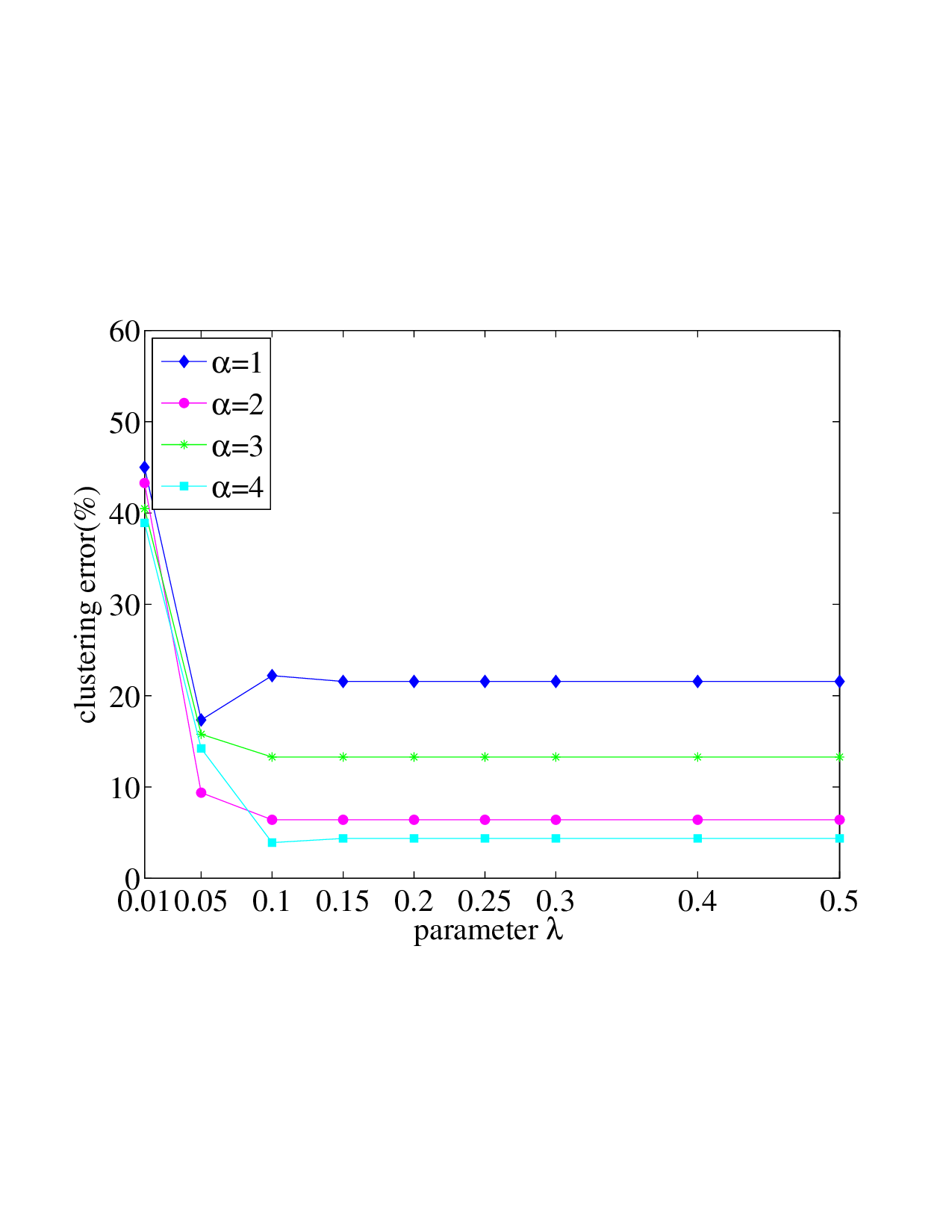}}
\end{minipage}
\caption{Clustering error given different $\lambda$ and $\alpha$ combinations, using the first 10 classes in the Extended Yale B Database. (a) Raw data. (b) The 500-dimensional data obtained by applying PCA. (c) The 100-dimensional data obtained by applying PCA. }
\label{fig:face11} 
\end{figure*}

Fig. \ref{fig:face11} shows the influence of the parameters $\lambda$ and $\alpha$ on the face clustering errors of LRRSC. In each experiment, $\alpha  \in {\rm{\{ 1, 2, 3, 4\} }}$. Generally, larger $\alpha$ leads to better clustering performance. For example, we let $\lambda$ range from 0.15 to 0.4 with $\alpha=3$. Then, the clustering error varies from $3.91\%$ to $6.41\%$ (Fig. \ref{fig:face:a}). When we let $\lambda$ range from 0.15 to 0.4 with $\alpha=1$, the clustering error varies from $15.31\%$ to $42.19\%$. However, note that if $\alpha$ is too large (i.e., $\alpha=4$). LRRSC must narrow the range of parameter $\lambda$ to obtain the desired result. This can also be observed in Fig. Fig. \ref{fig:face:b}. Fig. \ref{fig:face:c} shows that LRRSC performs well for a large range of $\lambda$. This is benefited from the 100-dimensional data obtained by applying PCA with noise removal. Consequently, LRRSC is capable of a stable face clustering performance when $\lambda$ is chosen according to the noise level.

\begin{table}[!htbp]
\small
\setlength{\abovecaptionskip}{0pt}
\setlength{\belowcaptionskip}{10pt}
\setlength{\tabcolsep}{1pt}
\centering
\caption{Clustering error (\%) of different algorithms on the first 10 classes of the Extended Yale B Database.}
\label{tb:face1}
\begin{tabular}{|cccccccc|}
\hline
Algorithm & LRRSC & eLRRSC & SLRR & LRR & LRR-PSD & LRSC &  SSC\\
\hline
Dim. = 100  &  4.37 & \textbf{4.22} & \textbf{4.22} & 21.09 & 38.44 & 36.47 & 36.56 \\
Dim. = 300 & \textbf{3.91} & \textbf{3.91} & \textbf{3.91} & 20.63 & 38.12 & 35.78 & 35.47 \\
Dim. = 500 & 3.91 & \textbf{2.97} & 3.13 & 21.56 & 35.47 & 36.67 & 35 \\
Raw data & 3.91 &  \textbf{2.97} & - & 20.94 & 35.47 & 36.97 & 35 \\
\hline
\end{tabular}
\end{table}

The three different feature dimensions of the face images required 32, 118, 114 and 113 iterations. Table \ref{tb:face1} shows the proposed eLRRSC has most promising performance. For example, eLRRSC achieved a low clustering error of $2.97\%$ for the original data, and improved the clustering accuracy by at least $18\%$ when compared with LRR, LRSC and SSC. We observed the same advantages when using our proposed method for the 500- and 100-dimensional data obtained using PCA. The clustering results for each algorithm using raw or reduced dimension data are very similar, which suggests that the face images of an individual lie close to a union of subspaces. These clustering results confirmed that the affinity calculated from the symmetric low-rank representation significantly improves the clustering accuracy when the data are grossly contaminated by noise, and that it outperforms the other algorithms. LRR compares favorably against the other algorithms. LRR-PSD, SSC, and LRSC have very similar clustering results.

Fig. \ref{fig:face12} showed the computational times of the competing algorithms corresponding to results outside parentheses in Table \ref{tb:face1}. We can see that the computation costs of eLRRSC, SLRR and LRSC significantly outperformed the other approaches. This is because both of them can obtain a closed form solution of the low-rank representation, which they uses to build the affinity. Thus, they can run much faster than the other approaches. On other hand, LRRSC, LRR, LRR-PSD have comparable computational times because of their efficient convex optimization techniques.

\begin{figure}[htbp]
\centering
\includegraphics[width=7cm]{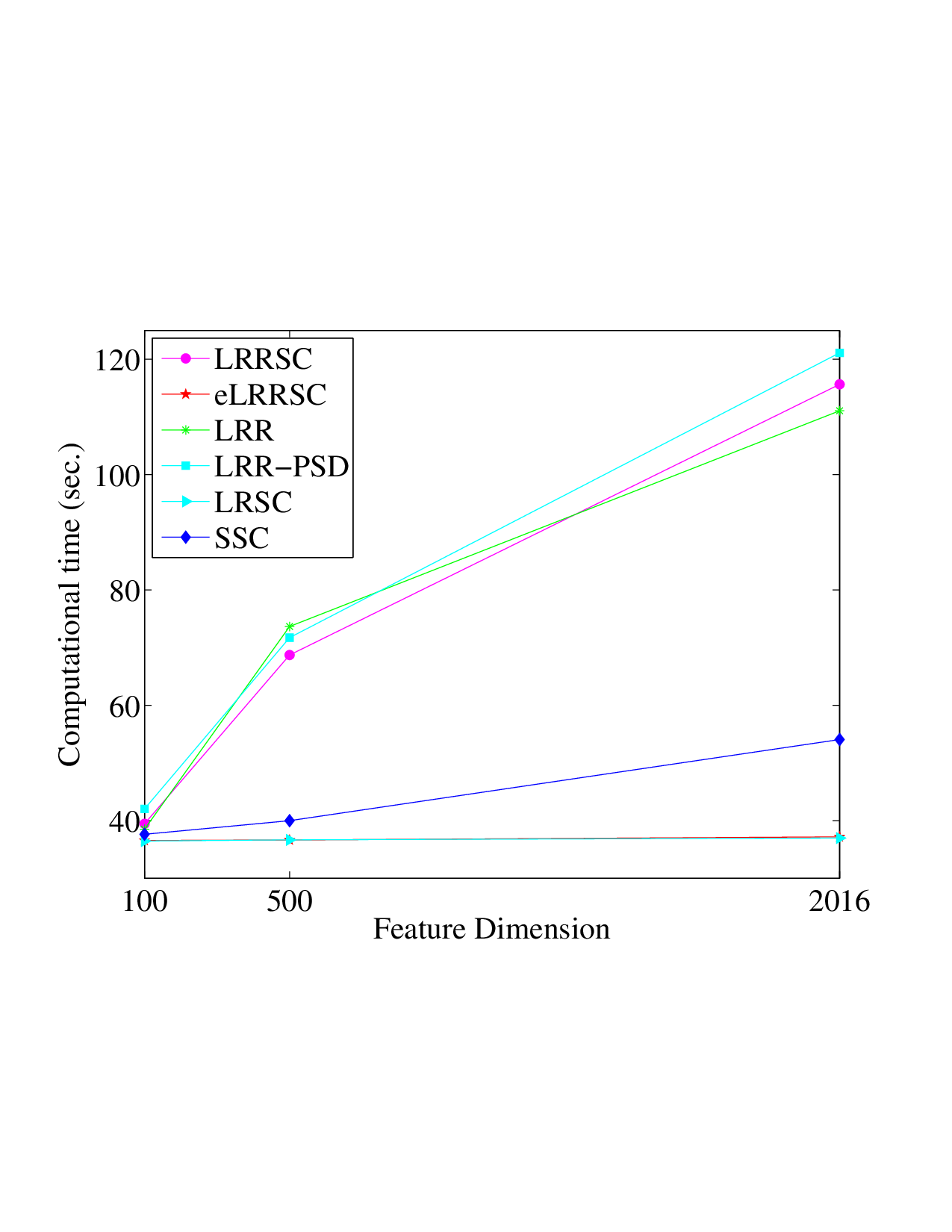}
\caption{Computational time (seconds) of each algorithm for different feature dimensions, using the first 10 classes of the Extended Yale B Database.}
\label{fig:face12}
\end{figure}

Since clustering performance of LRRSC and eLRRSC is closely related with the choice of the parameters, we conducted another experiment to illustrate the effect of estimation of the parameters of LRRSC and eLRRSC using various number of training samples. We designed four groups of training samples, where each group randomly selected 5, 10, 15 and 20 images of each person respectively. The rest are used for testing. Because the training and testing samples are selected randomly, 10 different training and test sample sets are chosen for parameter evaluation. The final clustering result is computed by averaging the recognition rates from these ten experiments. We set parameter $\alpha$ range from 2 to 4. For LRRSC, we let $\lambda$ range from 0.1 to 0.3 in steps of 0.05. For eLRRSC, and we let $\lambda$ range from 5 to 40 in steps of 5, and let $\mu$ range from 0.01 to 0.3 in steps of 0.02 respectively.

Table \ref{tb:parameter} shows the mean clustering error and standard deviation of LRRSC and eLRRSC when the number of an individual's images varies from 5 to 20. LRRSC and eLRRSC obtained similar clustering results under different numbers of training samples. However, eLRRSC achieved a lower computation cost because its solution can be computed in closed form. Besides, it can be seen from Table \ref{tb:parameter} that the mean clustering error gradually raises as the number of samples increases. The larger the number of samples there is in the experiment, the greater the computational complexity there is for clustering regarding as an unsupervised manner.

\begin{table}[!htbp]
\small
\setlength{\abovecaptionskip}{0pt}
\setlength{\belowcaptionskip}{10pt}
\setlength{\tabcolsep}{5pt}
\centering
\caption{Clustering error (\%) and computational time (seconds) of LRRSC and eLRRSC on the first 10 classes of the Extended Yale B Database using different number of training samples for parameter evaluation.}
\label{tb:parameter}
\begin{tabular}{|c|ccc|ccc|}
\hline
\multirow{2}{*}{Algorithm} & \multicolumn{3}{|c|}{LRRSC} & \multicolumn{3}{|c|}{eLRRSC} \\
\cline{2-7}
& Mean & Std. & Time & Mean & Std. & Time \\
\hline
5  & 7.68 & 3.8 & 119.14 & \textbf{6.22} & \textbf{3.29} & \textbf{34.24} \\
10 & \textbf{8.15} & 4.73 & 96.65 & 8.2 & \textbf{4.58} & \textbf{31.27}\\
15 & 9.2 & \textbf{4.44} & 77.82 & \textbf{9.06} & 4.92 & \textbf{29.58}\\
20 & \textbf{9.34} & \textbf{4.42} & 68.53 & 11.34 & 5.11 & \textbf{27.3}\\
\hline
\end{tabular}
\end{table}

Finally, we evaluated the performance and robustness of LRRSC and eLRRSC as well as the other methods on a more challenging set of face images using artificial occlusion, namely random pixel corruptions. To simulate random pixel corruptions, the locations of corrupted pixels of face images were chosen randomly with uniformly distributed random values in the range [0, 1]. The percentage of pixel corruption levels was varied from 10 to 40\% in steps of 10\%. Figure  \ref{fig:yalebb} shows some examples of the face images with random 10\% pixel corruptions.  All experiments were repeated 10 times. Table \ref{tb:corruption} shows the average clustering error. Some experiment results are given by our previous work \cite{Chen2016SLRR}. The results demonstrate that LRRSC, eLRRSC and SLRR obtain similar clustering accuracies. At corruption percentages of 10\% and 20\%, SLRR obtains the lowest clustering error. Besides, LRRSC consistently outperformed all the other methods for larger percentages of corrupted pixels, i.e., corruption percentages of 30\% and 40\%. Compared with the other competing methods, LRRSC, eLRRSC and SLRR are slightly more stable as the percentage of corruption increases.

\begin{table}[!htbp]
\small
\setlength{\abovecaptionskip}{0pt}
\setlength{\belowcaptionskip}{10pt}
\setlength{\tabcolsep}{2pt}
\centering
\caption{Clustering error (\%) by applying different algorithms on the first 10 classes of the Extended Yale Database B contaminated by random pixel corruptions.}
\label{tb:corruption}
\begin{tabular}{|c|ccccccc|}
\hline
Ratio (\%) & LRRSC & eLRRSC & SLRR &  LRR  & LRR-PSD & LRSC & SSC \\
\hline
  10 & 12.16 & 11.28 & \textbf{9.23} & 21.38 & 25.31 & 16.22 & 32.84 \\
\hline
  20 & 12.25 & 11.51 & \textbf{10.34} & 24.77  & 26.01  & 17.47 & 39.44 \\
\hline
  30 & \textbf{12.79} & 13.18 & 13.69 & 30.44  & 30.55 & 17.2 & 43.84 \\
\hline
  40 & \textbf{12.23} & 12.58 & 14.59 & 31.72 & 31.02  & 20.72 & 48.95 \\
\hline
\end{tabular}
\end{table}

\texttt{2. Second experimental scenario: } We used the experimental settings from \cite{Elhamifar2013SSC}. We divided the 38 subjects into four groups, where subjects 1 to 10, 11 to 20, 21 to 30, and 31 to 38 correspond to four different groups. For each of the first three groups, we considered all choices of $n \in \{ 2,3,5,8,10\}$. For the last group, we considered all choices of $n \in \{ 2,3,5,8\} $. We tested each choice (i.e., each set of $n$ subjects) using each algorithm. Finally, the mean and median subspace clustering errors for different number of subjects were computed using all algorithms. Note that we applied these clustering algorithms to the normalized face images.

\begin{table}[!htbp]
\small
\setlength{\abovecaptionskip}{0pt}
\setlength{\belowcaptionskip}{10pt}
\setlength{\tabcolsep}{1pt}
\centering
\caption{Average clustering error (\%) for different number of subjects, applying each algorithm to the Extended Yale B database.}
\label{tb:face2}
\begin{tabular}{|cccccccc|}
\hline
Algorithm & LRRSC & eLRRSC & SLRR & LRR & LRR-PSD & LRSC & SSC \\
\hline
2 Subjects & & & & & & &\\
Mean & 1.78  & 1.32 & \textbf{1.29} & 2.54 & 3.04 & 4.25 & 1.86 \\
Median & 0.78  & 0.78 & 0.78 & 0.78 & 2.34 & 3.13 & \textbf{0} \\
\hline
3 Subjects & & & & & & &\\
Mean & 2.61  & 2.08 & \textbf{1.94} & 4.23 & 4.33 & 6.07& 3.24 \\
Median & 1.56 & 1.56 & 1.56  & 2.6 & 3.91 & 5.73 & \textbf{1.04} \\
\hline
5 Subjects & & & & & & &\\
Mean & 3.19  & \textbf{2.5} & 2.72 & 6.92 & 10.45 & 10.19 & 4.33 \\
Median & 2.81  & \textbf{2.19} & 2.5 & 5.63 & 7.19 & 7.5 & 2.82 \\
\hline
8 Subjects & & & & & & &\\
Mean & 4.01  & \textbf{3.02} & 3.21 & 13.62 & 23.86 &  23.65 & 5.87 \\
Median & 3.13 & \textbf{2.34} & 2.93 & 9.67 & 28.61 & 27.83 & 4.49 \\
\hline
10 Subjects & & & & & & &\\
Mean & 3.7 & \textbf{3.28} & 3.49 & 14.58 & 32.55 & 31.46 & 7.29 \\
Median & 3.28 & \textbf{2.81} & \textbf{2.81} & 16.56 & 34.06 & 28.13 & 5.47 \\
\hline
\end{tabular}
\end{table}

Table \ref{tb:face2} shows the clustering results of various approaches using different number of subjects. When considering two subjects, eLRRSC achieved a clustering error of $1.32\%$. The eLRRSC algorithm consistently obtained lower average clustering errors than the other algorithms when the number of subjects increased. For example, there was nearly $4\%$ improvement in clustering accuracy compared with SSC for 10 subjects. Note that SSC performed better than the other original LRR-based approaches (LRR, LRR-PSD, and LRSC) in terms of the average clustering error. This confirms that our proposed method is very effective and robust to different number of subjects for face clustering.

\begin{figure*}[!htbp]
\begin{minipage}[t]{0.14\linewidth}
\centering
\subfigure[LRRSC]{
\label{fig:affinity:a}
\includegraphics[width=1\textwidth]{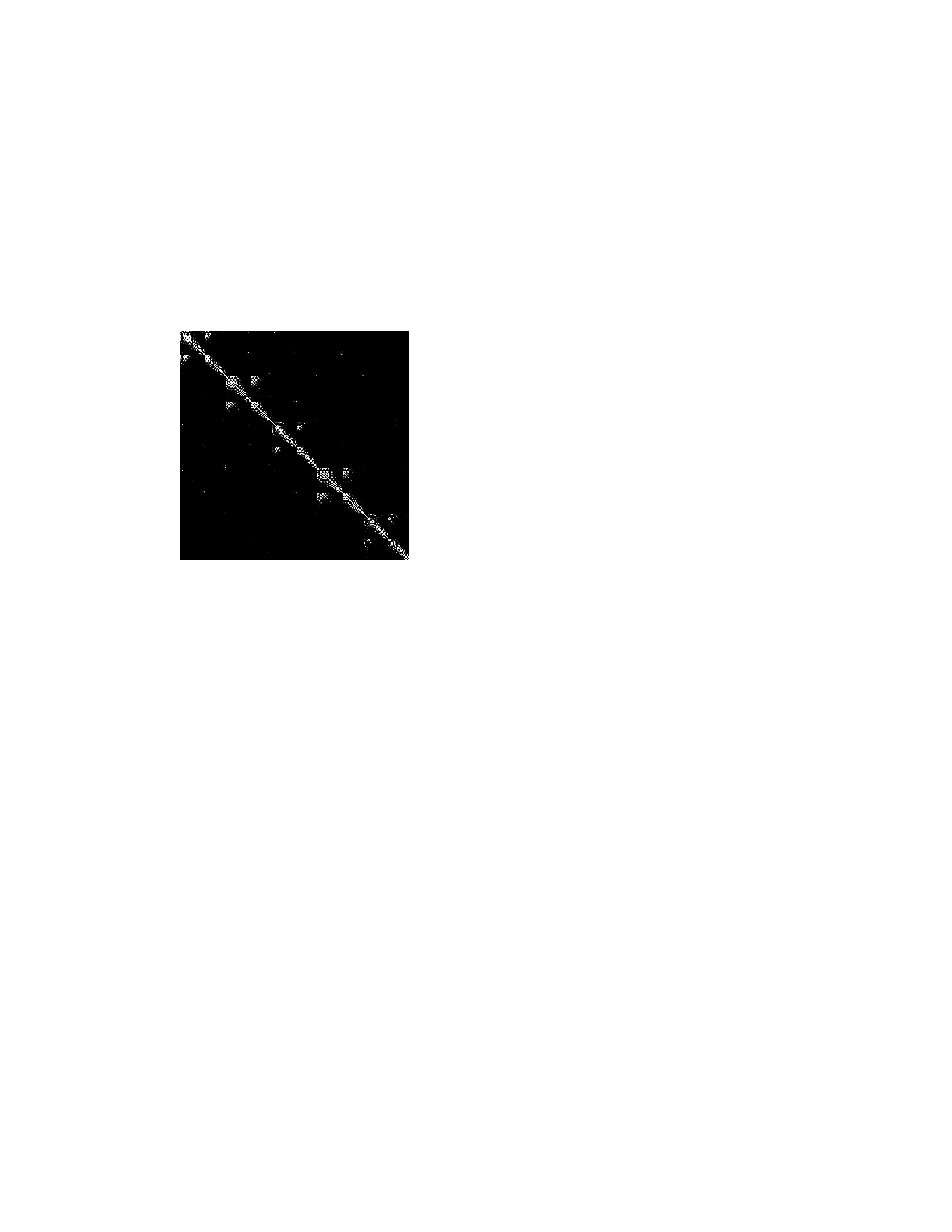}}
\end{minipage}%
\begin{minipage}[t]{0.14\linewidth}
\centering
\subfigure[eLRRSC]{
\label{fig:affinity:b}
\includegraphics[width=1\textwidth]{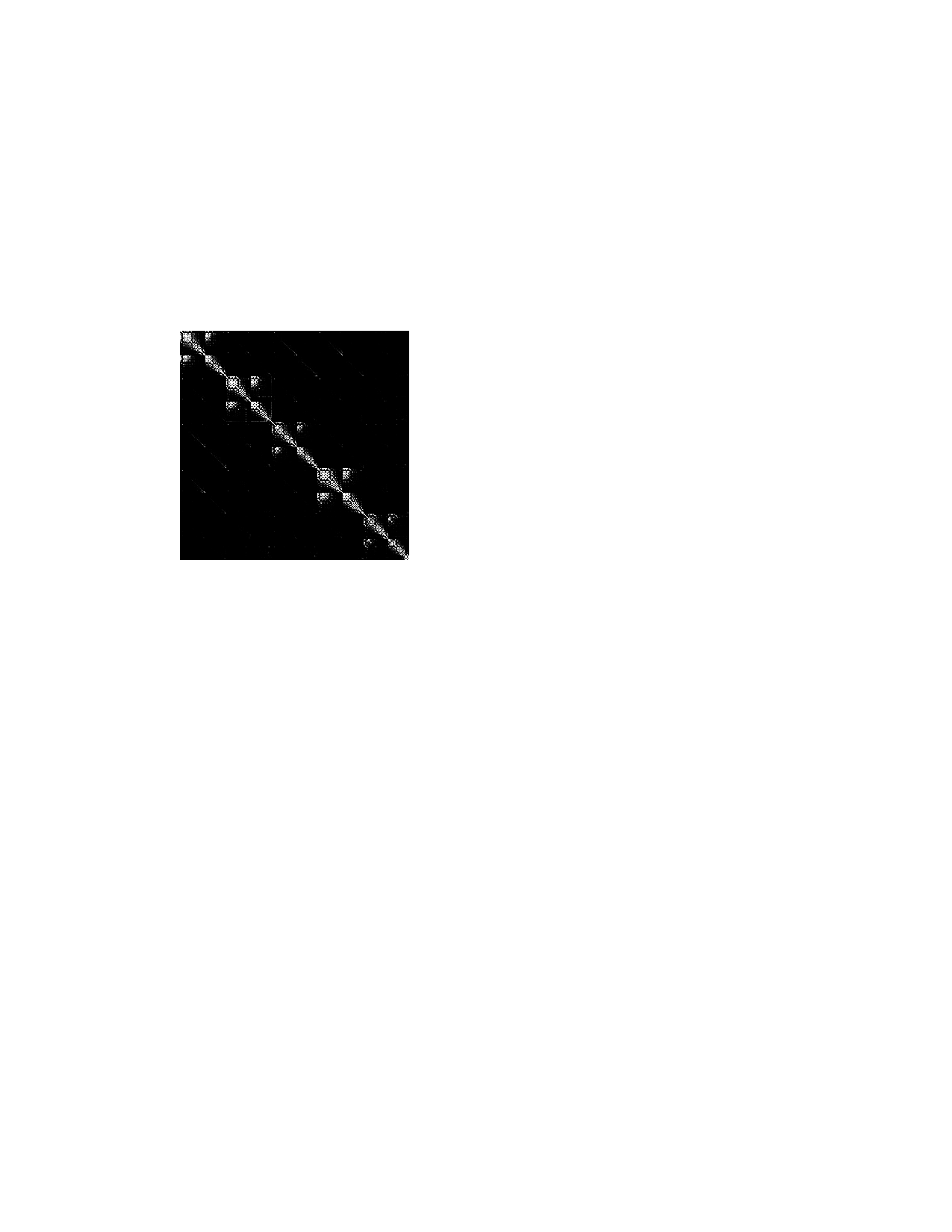}}
\end{minipage}
\begin{minipage}[t]{0.14\linewidth}
\centering
\subfigure[SLRR]{
\label{fig:affinity:d}
\includegraphics[width=1\textwidth]{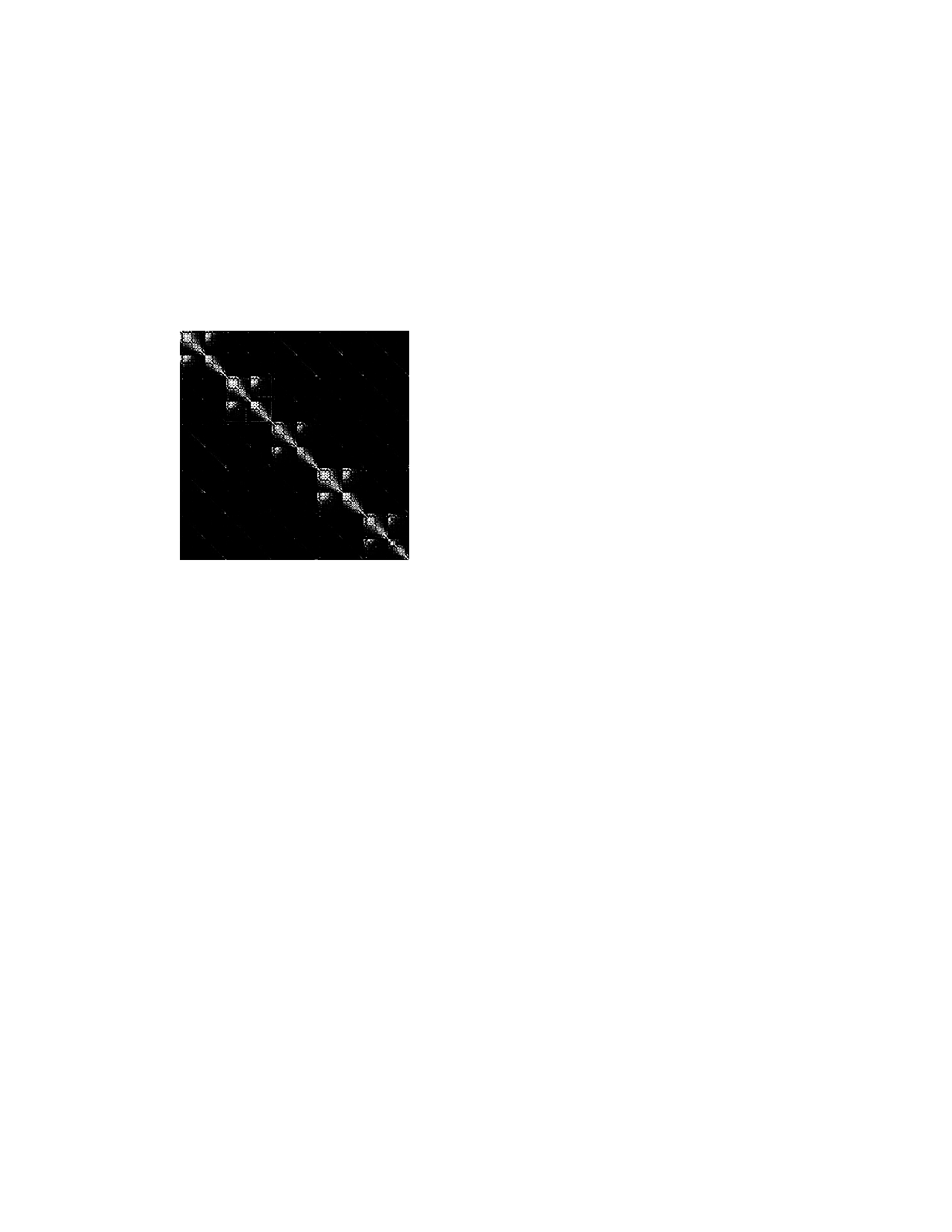}}
\end{minipage}%
\begin{minipage}[t]{0.14\linewidth}
\centering
\subfigure[LRR]{
\label{fig:affinity:e}
\includegraphics[width=1\textwidth]{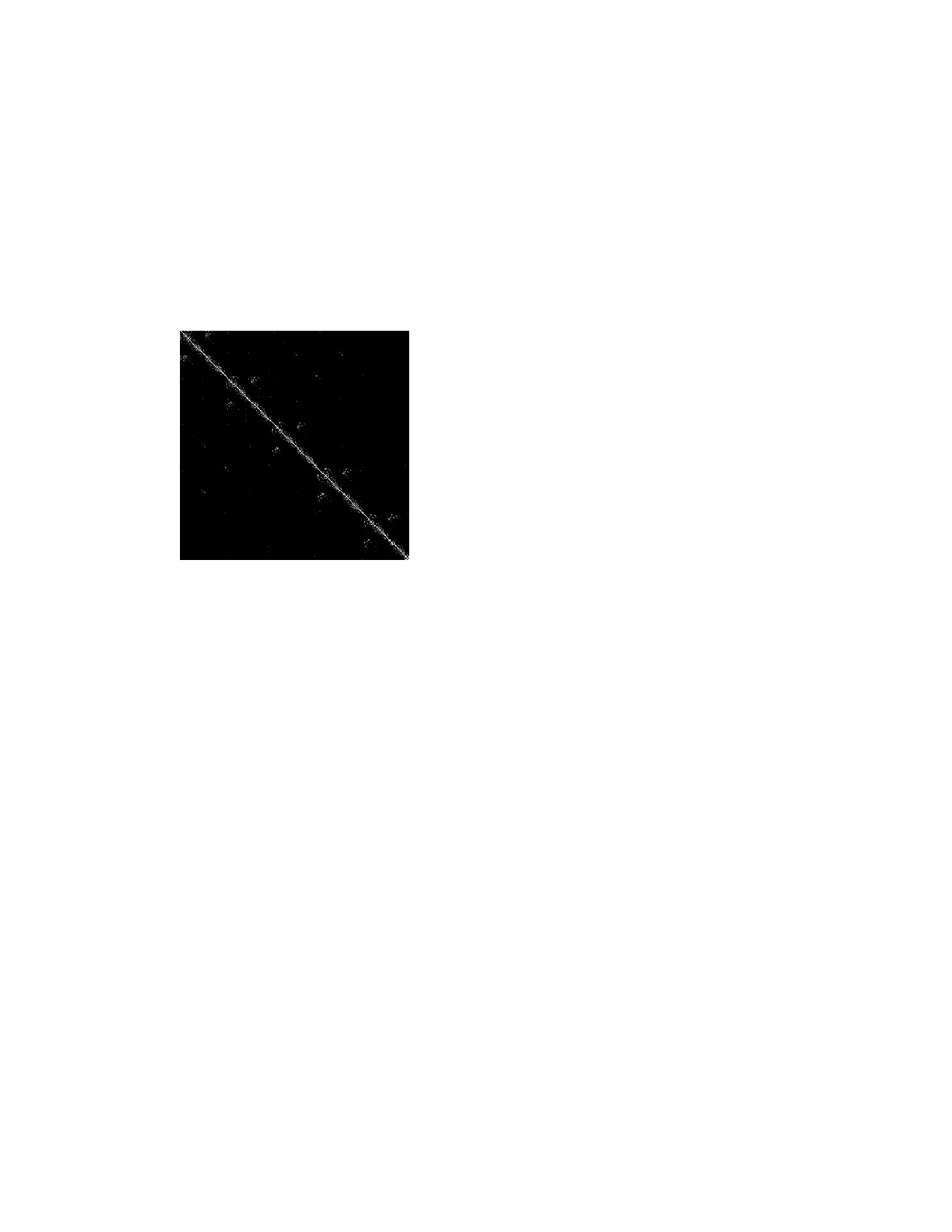}}
\end{minipage}
\begin{minipage}[t]{0.14\linewidth}
\centering
\subfigure[LRR-PSD]{
\label{fig:affinity:f}
\includegraphics[width=1\textwidth]{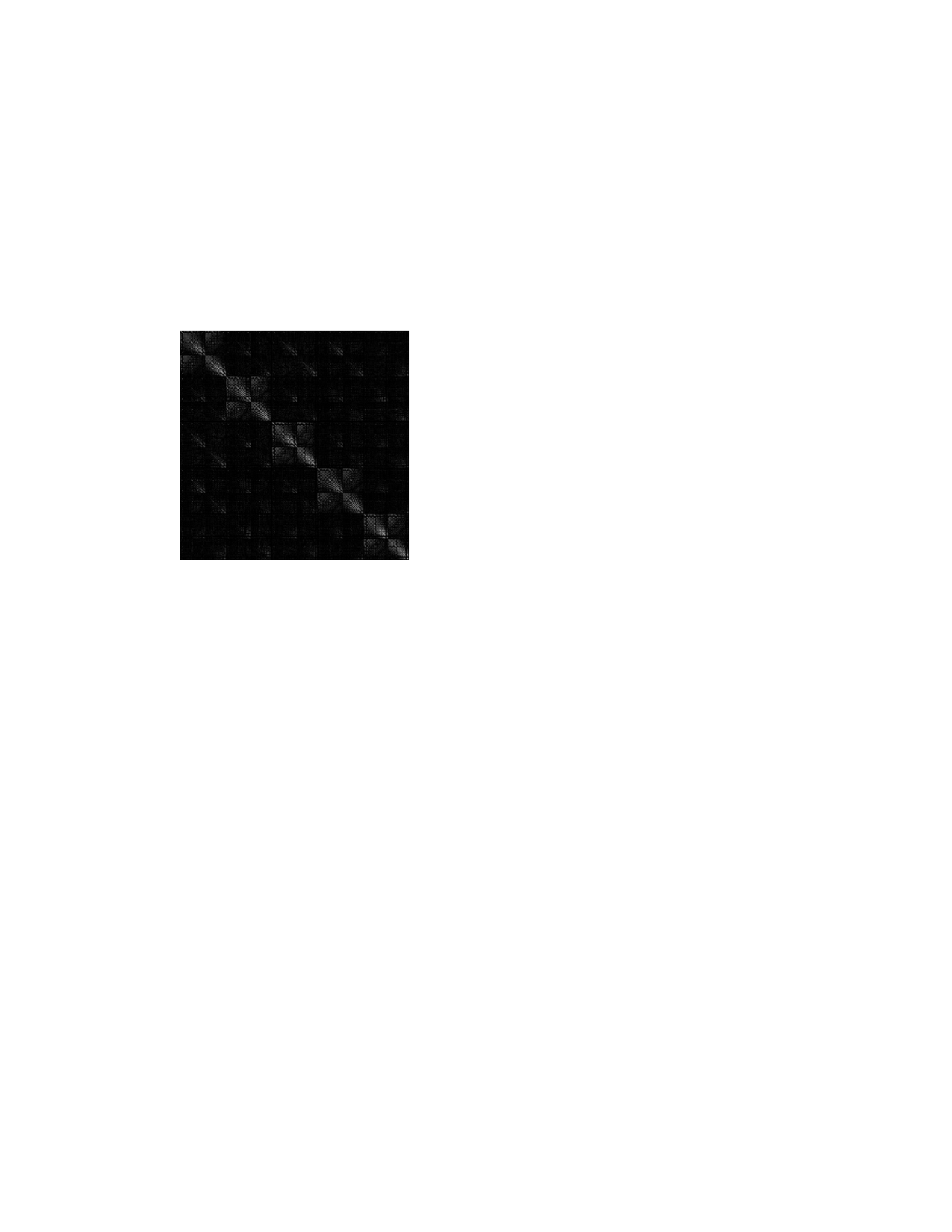}}
\end{minipage}
\begin{minipage}[t]{0.14\linewidth}
\centering
\subfigure[LRSC]{
\label{fig:affinity:g}
\includegraphics[width=1\textwidth]{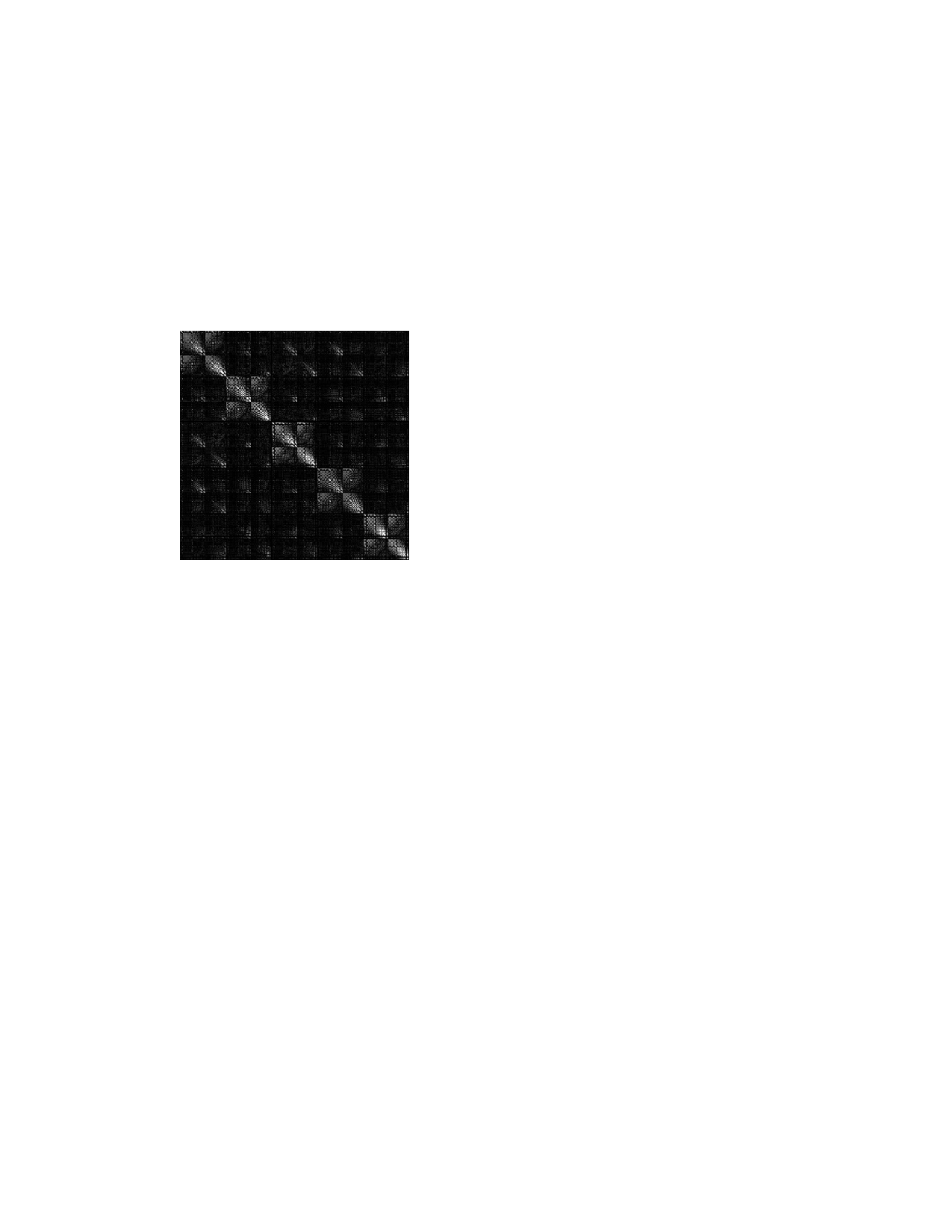}}
\end{minipage}
\begin{minipage}[t]{0.14\linewidth}
\centering
\subfigure[SSC]{
\label{fig:affinity:h}
\includegraphics[width=1\textwidth]{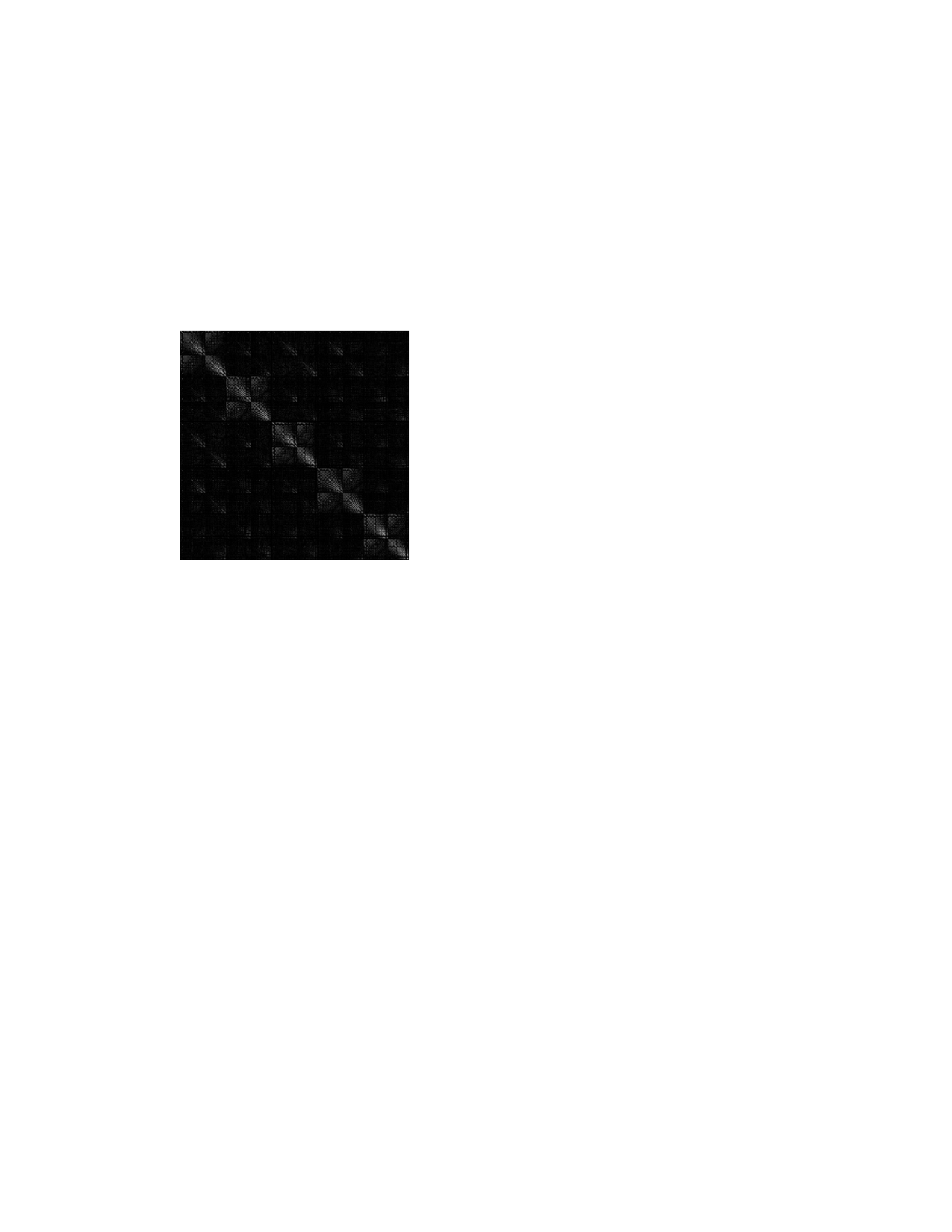}}
\end{minipage}
\caption{Examples of the affinity graph matrix produced when using different algorithms for the Extended Yale B Database with five subjects.}
\label{fig:affinity}
\end{figure*}

Fig. \ref{fig:affinity} shows examples of the affinity graph matrix produced by the different algorithms for the Extended Yale B Database with five subjects. It is clear from Fig. \ref{fig:affinity} that the affinity graph matrix produced by LRRSC and eLRRSC has a distinct block-diagonal structure, whereas the other approaches do not. The clear block-diagonal structure implies that each subject becomes highly compact and the different subjects are better separated. Note that the number of iterations taken by our algorithm is always less than 335. The average numbers of iterations for the five different sets ($n \in \{ 2,3,5,8,10\} $ subjects) in the proposed method are 308, 256, 202, 182 and 166.

Fig. \ref{fig:face13} shows the mean computational times for the Extended Yale B database with different number of subjects. Note that the computational times of eLRRSC, SLRR and LRSC are still much lower than that of the other algorithms. However, LRSC does not perform well, especially as the number of subjects increases. LRRSC, LRR, and SSC have comparable computational times in these experiments.

\begin{figure}[htbp]
\centering
\includegraphics[width=9cm]{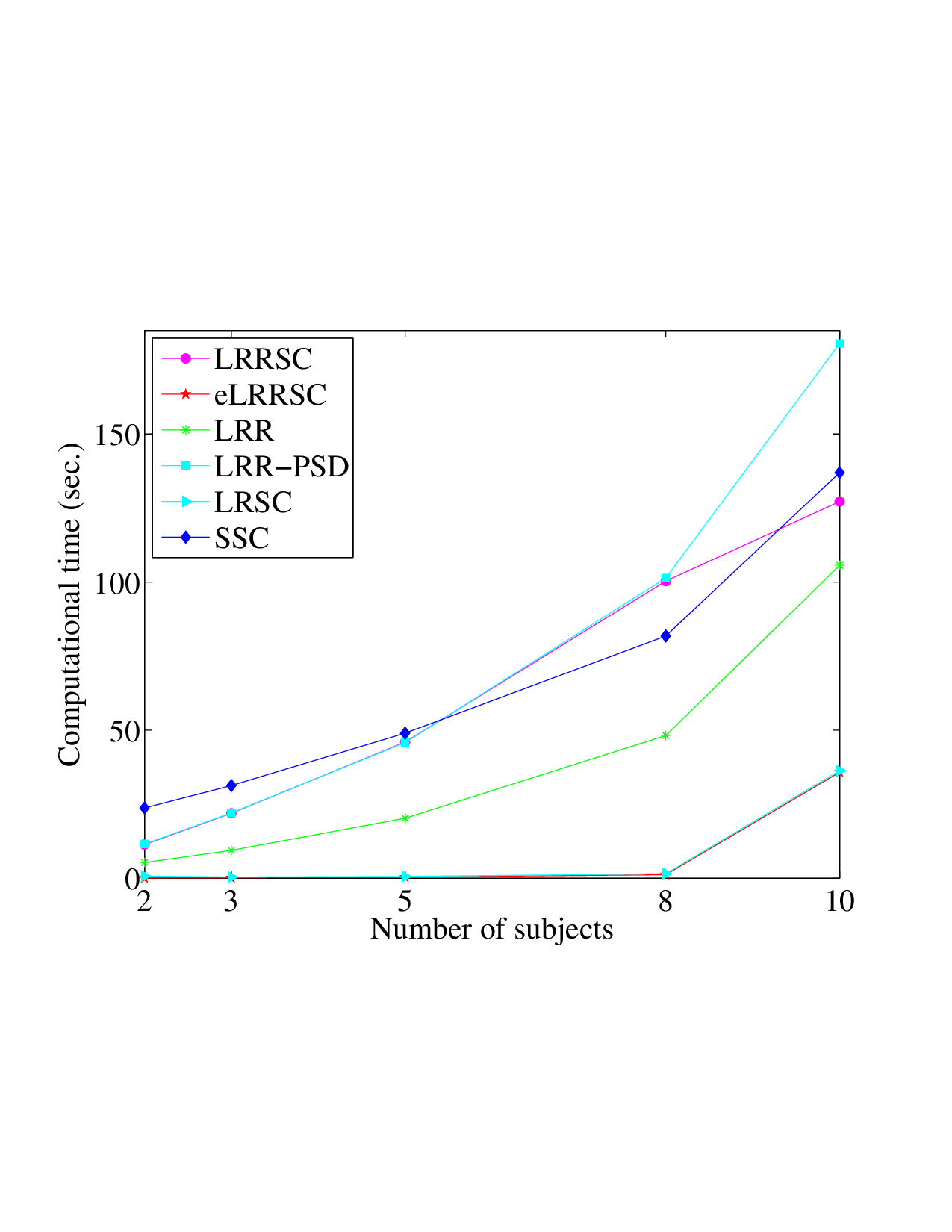}
\caption{Comparison of mean computational time (seconds) of each algorithm applied to the Extended Yale B database, using different number of subjects.}
\label{fig:face13}
\end{figure}

\subsection{Experiments on motion segmentation}

Motion segmentation refers to the problem of segmenting feature trajectories of multiple rigidly moving objects into their corresponding spatiotemporal regions in a video sequence. The feature trajectories from a single rigid motion lie in a linear subspace of at most four dimensions \cite{Boult1991Factor}. Motion segmentation can be regarded as a subspace clustering problem.

\subsubsection{The Hopkins 155 database}

We first consider the Hopkins 155 database \cite{Hopkins155} for the motion segmentation problem. It consists of 155 sequences of two motions or three motions. Fig. \ref{fig:h155example} shows some example frames from two video sequences with feature trajectories. There are $39-550$ data vectors drawn from two, three or five motions for each sequence, which correspond to a subspace. Each sequence is a separate clustering task.

\begin{figure}[!htbp]
\begin{minipage}[t]{0.5\linewidth}
\centering
\subfigure[]{
\label{fig:h155:a} 
\includegraphics[width=4cm]{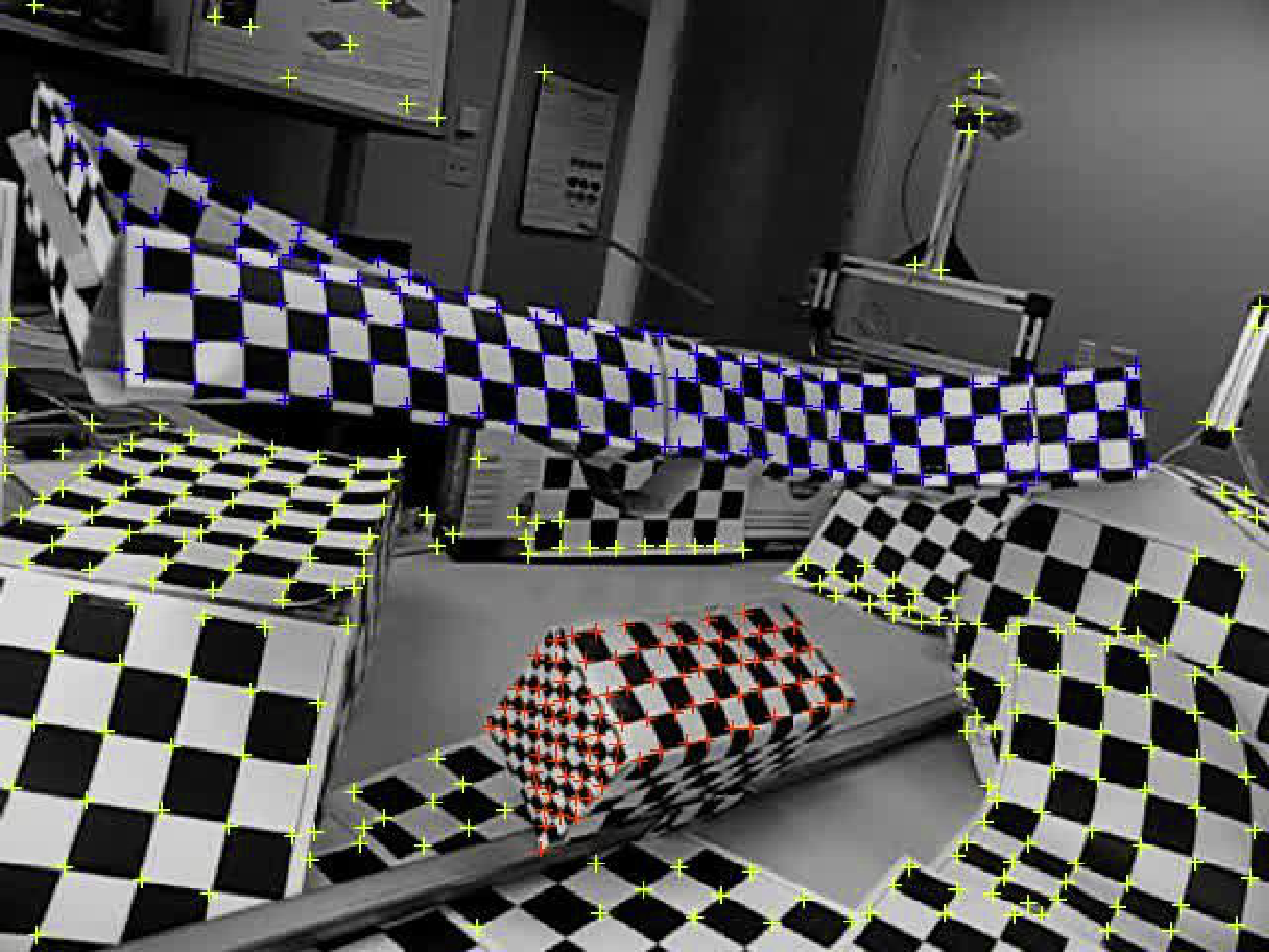}}
\end{minipage}%
\begin{minipage}[t]{0.5\linewidth}
\centering
\subfigure[]{
\label{fig:h155:b} 
\includegraphics[width=4cm]{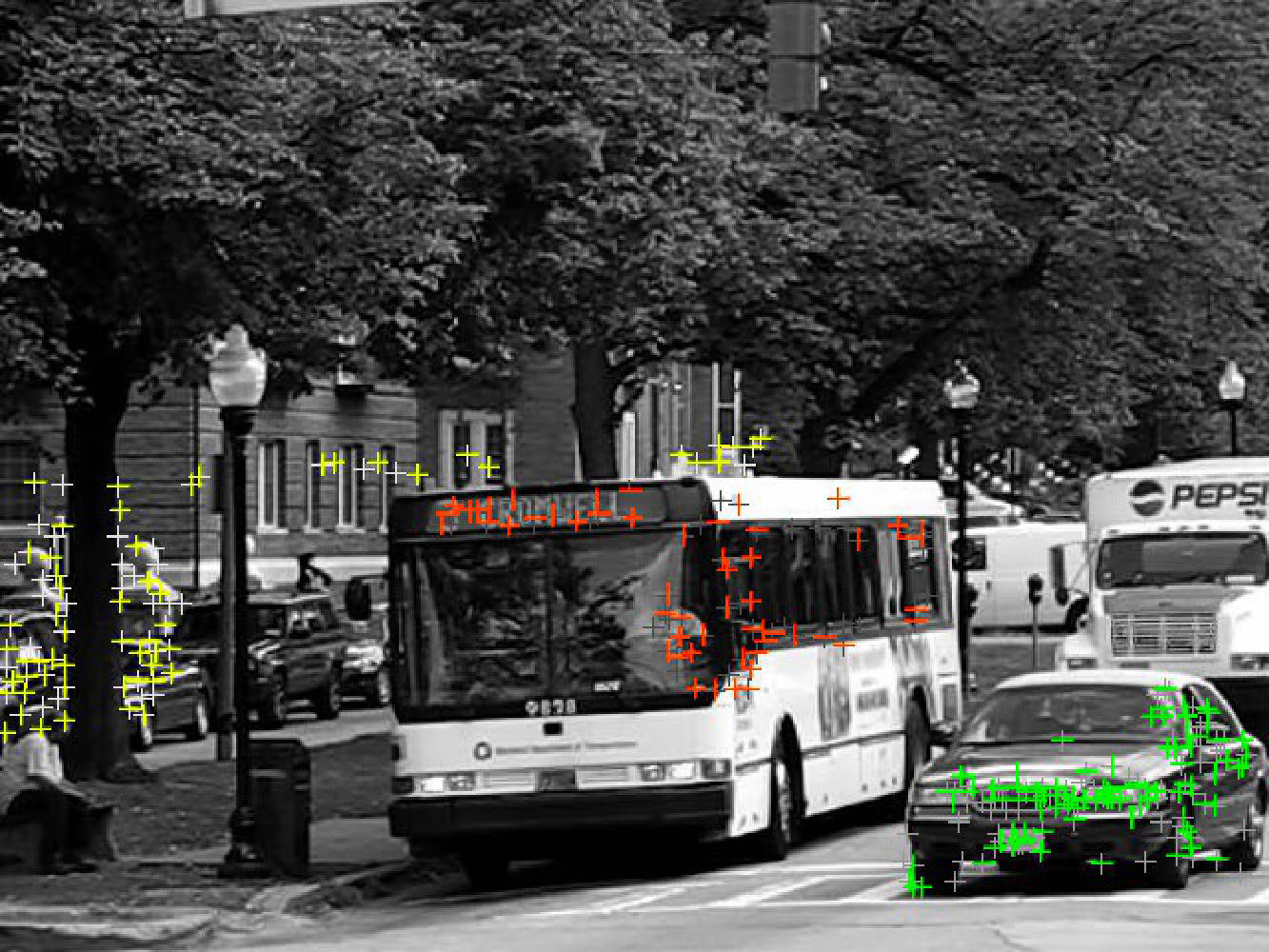}}
\end{minipage}
\caption {Example frames from two video sequences with feature trajectories involving three motions.}
\label{fig:h155example} 
\end{figure}

We considered two scenarios to evaluate the performance of the applicability of LRRSC and eLRRSC to motion segmentation. We first used the original feature trajectories associated with each motion in an affine subspace, i.e., enforced that the coefficients sum to $1$. Then, we projected the original data into a $4n$-dimensional subspace using PCA, where $n$ is the number of subspaces.

The clustering performance for all 155 sequences was largely affected by $\lambda$. Fig. \ref{fig:h155:a} and \ref{fig:h155:b} show the clustering errors of LRRSC using two experimental settings for different $\lambda$ over all 155 sequences. In Fig. \ref{fig:h155:a}, when $\lambda$ was between 1 and 6 the clustering error varied between $1.5\%$ and $2.67\%$; when $\lambda$ was between 2.4 and 4, the clustering error remained very stable, varying between $1.5\%$ and $1.7\%$. In Fig. \ref{fig:h155:b}, when $\lambda$ was between 1 and 6, the clustering error varied from $1.56\%$ to $3.18\%$; when $\lambda$ was between 2.4 and 4, the clustering error varied from $1.56\%$ to $2.31\%$ and remained very stable. This implies that LRRSC performs well under a wide range of values of $\lambda$. In addition, choosing $\lambda$ for each sequence may improve the clustering performance, especially when the data are grossly corrupted by noise.

\begin{figure}[!htbp]
\begin{minipage}[t]{0.5\linewidth}
\centering
\subfigure[]{
\label{fig:h155:a} 
\includegraphics[width=4.4cm]{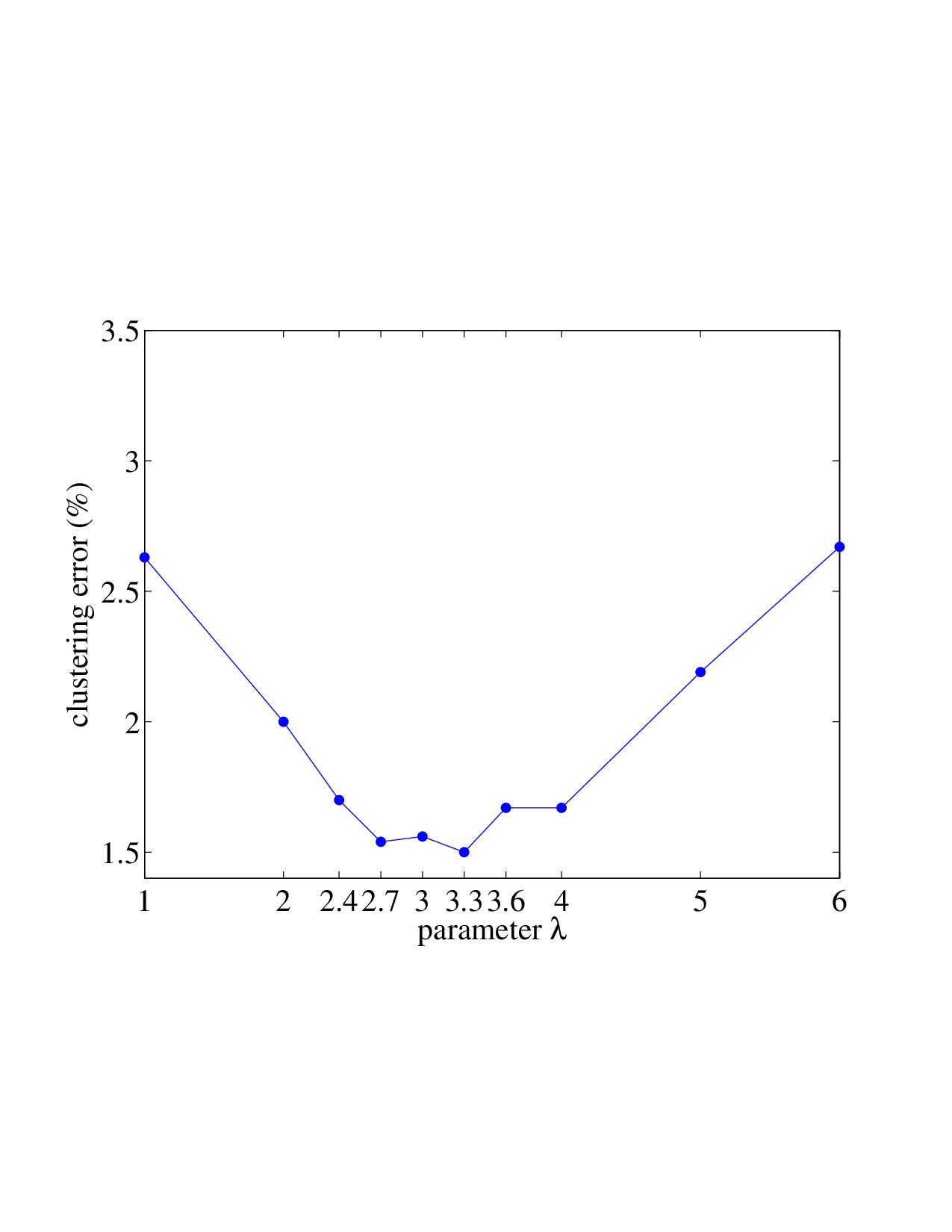}}
\end{minipage}%
\begin{minipage}[t]{0.5\linewidth}
\centering
\subfigure[]{
\label{fig:h155:b} 
\includegraphics[width=4.4cm]{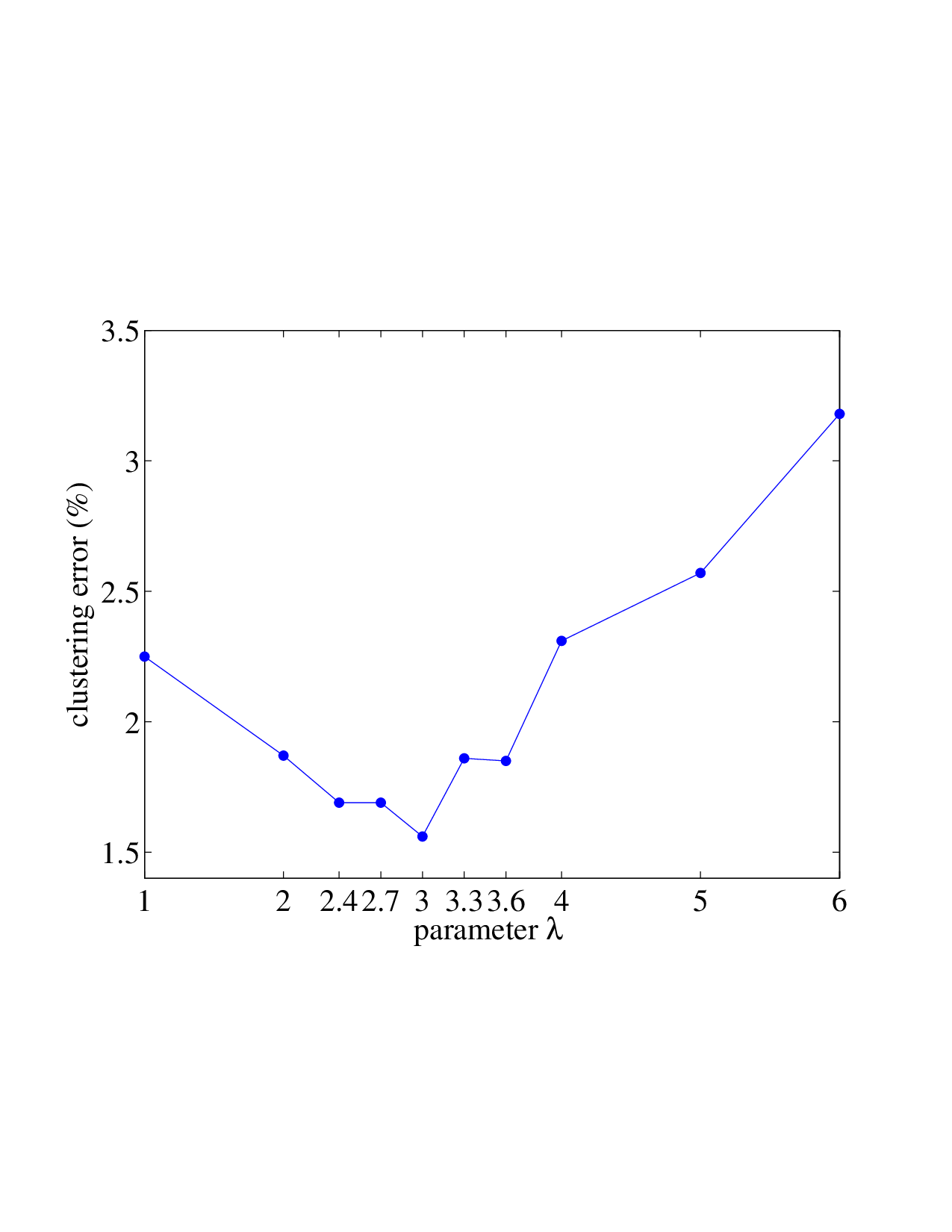}}
\end{minipage}
\caption {The influences of the parameter $\lambda$ of LRRSC. (a) The clustering error of LRRSC on the Hopkins 155 database with the $2F$-dimensional data points. (b) The clustering error of LRRSC on the Hopkins 155 database with the 4n-dimensional data points obtained by applying PCA. }
\label{fig:h155} 
\end{figure}

\begin{table}[!htbp]
\small
\setlength{\abovecaptionskip}{0pt}
\setlength{\belowcaptionskip}{10pt}
\setlength{\tabcolsep}{5pt}
\centering
\caption{Average clustering error (\%) and mean computation time (seconds) when applying the different algorithms to the Honkins 155 database, with the $2F$-dimensional data points.}
\label{h155tb1}
\begin{tabular}{|c|c|c|c|c|c|}
\hline
\multirow{2}{*}{Algorithm} & \multicolumn{4}{|c|}{Error} & \multirow{2}{*}{Time} \\
\cline{2-5}
 & mean & median & std. & max. &\\
\hline
  LRRSC & 1.5  & \textbf{0}  & 4.36 & \textbf{33.33} & 4.71  \\
  eLRRSC & \textbf{0.86} & \textbf{0} & \textbf{3.39} & 34.33  & \textbf{0.09} \\
 SLRR & 0.88  & \textbf{0}  & 3.63 & 38.06 & \textbf{0.09} \\
  LRR & 1.71  & 0 & 4.86   & 33.33 & 1.29 \\
  LRR-PSD  & 5.38  & 0.55  & 10.18 & 45.79   & 4.35 \\
  LRSC & 4.73  & 0.59  & 8.8 & 40.55 & 0.14 \\
  SSC & 2.23  & \textbf{0}  & 7.26  & 47.19 & 1.02 \\
\hline
\end{tabular}
\end{table}

\begin{table}[!htbp]
\small
\setlength{\abovecaptionskip}{0pt}
\setlength{\belowcaptionskip}{10pt}
\setlength{\tabcolsep}{3pt}
\centering
\caption{Average clustering error (\%) and mean computation time (seconds) when applying the different algorithms to the Honkins 155 database, with the $4n$-dimensional data points obtained using PCA.}
\label{h155tb2}
\begin{tabular}{|c|c|c|c|c|c|}
\hline
\multirow{2}{*}{Algorithm} & \multicolumn{4}{|c|}{Error} & \multirow{2}{*}{Time} \\
\cline{2-5}
 & mean & median & std. & max. &\\
\hline
  LRRSC & 1.56 & \textbf{0} & 5.48 & 43.38 & 4.62 \\
  eLRRSC  & \textbf{1.3}  & \textbf{0} & \textbf{4.97} & \textbf{39.73} & 0.08 \\
  SLRR & \textbf{1.3}& \textbf{0}  & 5.1 & 42.16 & \textbf{0.07} \\
  LRR & 2.17  & \textbf{0} & 6.58 & 43.38 & 0.69 \\
  LRR-PSD & 5.78  & 0.57 & 10.6 & 45.79 & 5.23 \\
  LRSC & 4.89  & 0.63  & 8.91  & 40.55 & 0.13 \\
  SSC & 2.47  & \textbf{0}  & 7.5  & 47.19 & 0.93 \\
\hline
\end{tabular}
\end{table}

Table \ref{h155tb1} and \ref{h155tb2} show the average clustering errors of the different algorithms on all 155 sequences of the Hopkins 155 database, using two experimental settings. In both experimental settings, eLRRSC obtained competitive clustering results and significantly outperformed the other algorithms. Specifically, LRRSC obtained $1.5\%$ and $1.56\%$, and eLRRSC obtained $0.86\%$ and $1.3\%$ clustering errors for the two experimental settings. This confirms the effectiveness and robustness of LRRSC and eLRRSC for the segmentation of different motion subspaces by exploiting the angular information of the principal directions of the symmetric low-rank representation of each motion. We note that the accuracy of LRRSC was very similar for the two experimental settings. This confirms that the feature trajectories of each sequence in a video approximately lie close to a $4n$-dimensional linear subspace of the $2F$-dimensional ambient space \cite{Boult1991Factor}. The computational costs of eLRRSC, SLRR and LRSC are much lower than the other algorithms. Hence, LRRSC and eLRRSC are effective and robust methods for motion segmentation. The computational cost of LRR is lower than the LRR-based methods (LRRSC and LRR-PSD). This is because LRR applies the dictionary learning method to improve performance, while LRRSC uses one SVD computation at each iteration and the original data's dictionary.

\subsubsection{The penguin motion database}

\begin{figure}[!htbp]
\begin{minipage}[t]{0.5\linewidth}
\centering
\subfigure[A original frame]{
\label{fig:penguin:a} 
\includegraphics[width=4cm]{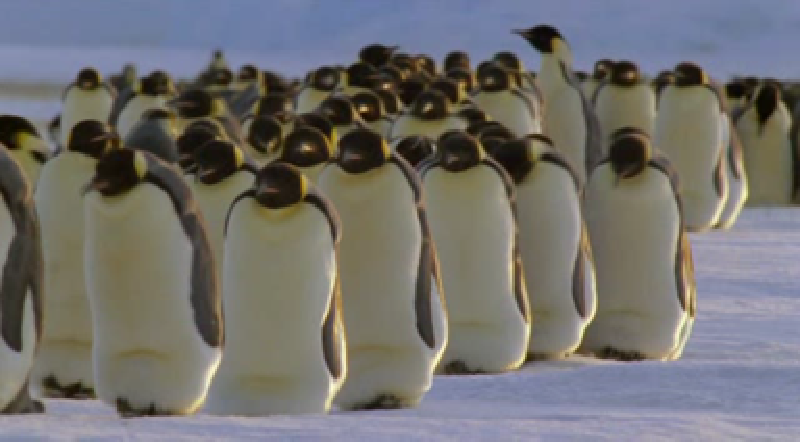}}
\end{minipage}%
\begin{minipage}[t]{0.5\linewidth}
\centering
\subfigure[The annotated frame]{
\label{fig:penguin:b} 
\includegraphics[width=4cm]{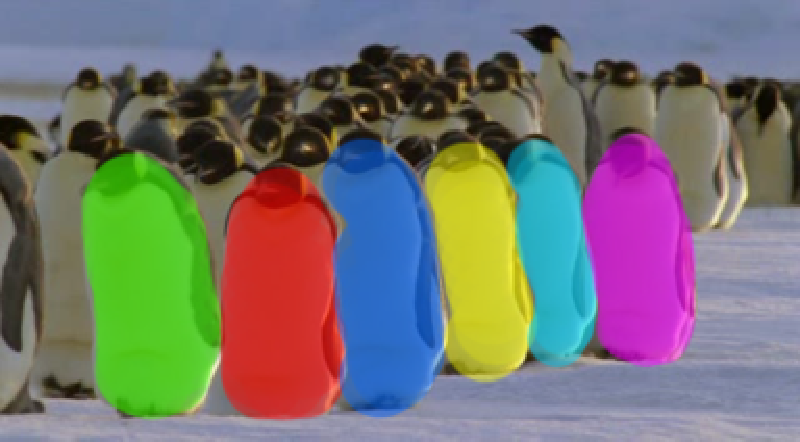}}
\end{minipage}
\caption {A example frame of a video sequence involving six motions.}
\label{fig:penguin:example} 
\end{figure}
In this section, we further compared the clustering performance obtained by LRRSC and eLRRSC against the other competing algorithms on a more challenging data set. The penguin motion database contains six non-rigidly moving objects over 42 annotated frames of a video sequence, which is drawn from the SegTrack dataset \cite{Li2013VS}. We presented four scenarios consisting of four data sets of different feature trajectories to illustrate the robustness and effectiveness of LRRSC and eLRRSC. In particular, we sampled 50, 100, 150 and 200 points of feature trajectories at the average intervals from each annotated frame of the penguin motion database respectively. This experiment is more challenging for all the algorithms. A typical example frame of a video sequence and its annotated frame are illustrated in Fig. \ref{fig:penguin:example}.

\begin{figure}[!htbp]
\centering
\includegraphics[width=8cm]{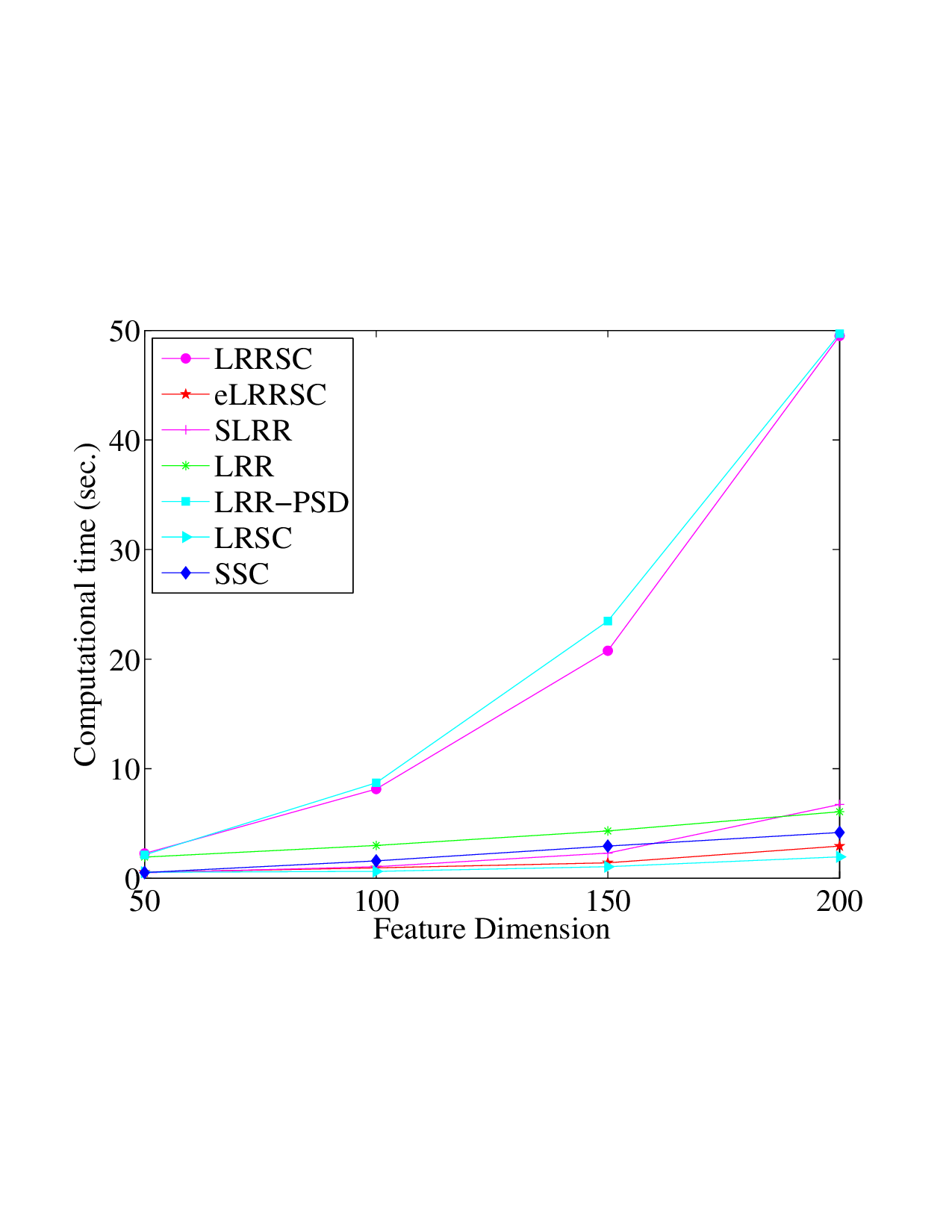}
\caption{Comparison of computational time (seconds) of each algorithm applied to the penguin motion database, using different number of feature trajectories.}
\label{fig:penguin:time}
\end{figure}

We reported clustering errors for four scenarios in Table \ref{tb:penguin}, which demonstrates that our proposed eLRRSC achieves a consistently high clustering accuracy. Besides, LRRSC also performs well in four scenarios. This suggests that the angular information of the principal directions of the symmetric low-rank representation have the potential to make non-rigidly moving objects of a video sequence separable when the number of motions increases. All the results show that SSC always performs poorly under different feature trajectories. Fig. \ref{fig:penguin:time} shows the computational costs of all competing methods. Clearly, eLRRSC, SLRR and LRSC achieve similar computational costs overall, which are lower than that of the other methods.

\begin{table}[!htbp]
\small
\setlength{\abovecaptionskip}{0pt}
\setlength{\belowcaptionskip}{10pt}
\setlength{\tabcolsep}{1pt}
\centering
\caption{Clustering error (\%) of different algorithms on the penguin motion database with different number of feature trajectories.}
\label{tb:penguin}
\begin{tabular}{|c|c|c|c|c|c|c|c|}
\hline
Algorithm & LRRSC & eLRRSC & SLRR &LRR & LRR-PSD & LRSC & SSC \\
\hline
50  & 4 & \textbf{3.33} & 20 & 27.67 & 22 & 21 & 20.33 \\
100 & 7.33 & \textbf{3.17}& 18.67 & 26.5 & 22.33 & 20.5 & 23.5 \\
150 & 8.22 & \textbf{3.78} & 17.67 & 21 & 22.33 & 22.11 & 24.56 \\
200 & 8  & \textbf{3.42} & 18.09 & 21.83 & 28.5 & 21.67 & 20.5 \\
\hline
\end{tabular}
\end{table}

\section{Discussions}
\label{sec:Discussions}

LRR-based techniques such as LRR and LRRSC compared with sparsity based methods able to capture globally linear structures of data. However, there are still several significant problems. For LRRSC and eLRRSC, how to choose proper parameters is an open problem in practice. Although the parameter ${\alpha}$ of LRRSC can be easily chosen empirically by its limited range, it is difficult to estimate the parameter ${\lambda}$ without prior knowledge. In addition, it is important to develop dictionary learning algorithms, which may significantly improve the subspace clustering performance.

Then, we discussed three differences among the above LRR-based methods. First, they contain different objective functions. For example, LRRSC and LRP-PSD imposed different constraints on low-rank representation, i.e., a symmetric constraint and semi-definite guarantees, respectively. In particular, LRRSC only aimed to obtain a symmetric matrix while LRR-PSD pursued a semi-definite matrix at the first step of the optimizations. In fact, we should emphasize that it is easy to validate that their corresponding efficiency, i.e., Lemma \ref{lemma1} and Theorem 14 \cite{Ni2010LRRPSD}, are distinct using synthetic data or some instances. For instance, different optimal solutions at the second step of the eLRRSC algorithm with relative large $\lambda$ can be achieved if we used semi-definite guarantees instead of symmetric constraint with respective optimization theory. Moreover, the procedures of the proof of two optimization theories are different. However, both of Lemma \ref{lemma1} and Theorem 14 can achieve the identical result if and only if $Q$ is a symmetric positive semi-definite matrix in Lemma \ref{lemma1}. This is because the results of SVD and eigenvalue decomposition of the matrix are identical if a symmetric positive semi-definite matrix is given.

Compared with LRR, LRR-PSD and LRSC, the second difference is LRRSC and eLRRSC utilize the angular information of its principal directions of symmetric low-rank representation to build similarity graph under the unified subspace clustering framework. The utilization of the angular information in the proposed algorithm guarantees its robustness towards evaluating the memberships between each pair of data points. Experimental results further demonstrated that it will lead to a significant improvement on the clustering performance. Overall, we can argue that combination of the two improvements plays an important role in the low-rank representation and achieves satisfied results in the subspace clustering.

The last difference is how to decrease the computation cost among the competing methods. The computational complexity of some existing LRR-based methods which require iterative SVD operations becomes computationally impracticable in large-scale subspace clustering problems. Hence, pursing a closed form solution is a positive way to avoid iterative SVD operations. Frobenius-norm plays a critical role in eLRRSC. By making use of the Frobenius norm, eLRRSC obtained a collaborative representation of high-dimensional data. Then the nuclear norm is employed in eLRRSC as a common surrogate for low-rank criterion. Finally, eLRRSC pursues a symmetric low-rank matrix preserving the low-dimensional subspace structures from the collaborative representation in a closed form solution. The experimental results illustrated that the computation costs of eLRRSC, SLRR and LRSC are much lower than that of other algorithms.

\section{Conclusions}
\label{sec:Conclusions}
In this paper, we proposed a method that used a low-rank representation with a symmetric constraint to solve the problem of subspace clustering, using the assumption that high-dimensional data are approximately drawn from a union of multiple subspaces. In contrast with existing low-rank based algorithms, LRRSC integrates the symmetric constraint into the low-rankness property of high-dimensional data representation, which can be efficiently calculated by solving a convex optimization problem. The affinity matrix for spectral clustering can be obtained by further exploiting the angular information of the principal directions of the symmetric low-rank representation. This is a critical step towards understanding the memberships between high-dimensional data points. To speed up the optimization procedures of LRRSC, we also developed the efficient variant of LRRSC (eLRRSC), which considers a closed form solution. Extensive experiments on benchmark databases showed that LRRSC and its variant produce very competitive results for subspace clustering compared with several state-of-the-art subspace clustering algorithms, and demonstrated its robustness when handling noisy real-world data.
\section*{Acknowledgments}

The authors thank the anonymous reviewers for their thorough and valuable comments and suggestions.





\bibliographystyle{model1-num-names}
\bibliography{lrrsc}







\end{document}